\DeclareMathAlphabet{\mathcal}{OMS}{cmsy}{m}{n}
\def\At{\text{\em At}}
\def\Not{\text{\tt not} \ }
\def\K{\mathbf{K}\, }
\def\M{\mathbf{M}\, }
\def\eligible{\mathit{eligible}}
\def\minority{\mathit{minority}}
\def\high{\mathit{high}}
\def\fair{\mathit{fair}}
\def\interview{\mathit{interview}}
\def\appointment{\mathit{appointment}}
\def\sM{\mathcal{M}}
\def\Atoms{\mathit{Atoms}}
\def\SM{\text{\rm SM}}
\newcommand\wv{\mathbb{W}}
\newcommand\wx{\mathbb{X}}
\def\cS{\mathcal{S}}
\newcommand\cset[1]{[#1]}
\newcommand\us{\mathbb{S}}
\newcommand\kdint[2]{({#2, #1})}
\def\Bodym{\mathit{Body}_{sub}}
\def\Bodyr{\mathit{Body}_{obj}}
\def\Head{\mathit{Head}}
\def\Body{\mathit{Body}}
\def\Bodyrp{\mathit{Body}^+_{ob}}
\def\Bodymp{\mathit{Body}^+_{sub}}
\def\bL{\K}
\title[Splitting Epistemic Logic Programs]{Splitting Epistemic Logic Programs
\thanks{
This is an extended version of a conference paper presented at the \emph{Fifteenth International Conference on Logic Programming and
Nonmonotonic Reasoning},  Philadelphia, PA, USA, June 3-7, 2019, where it received a best technical paper award~\protect\cite{cafafa19a}
An earlier version was also presented at the
\emph{Seventeenth International Workshop on Non-monotonic Reasoning}~\protect\cite{cafafa18}.
This work was partially supported by MINECO, Spain, grant TIC2017-84453-P, Xunta de Galicia, Spain (GPC ED431B 2019/03). 
The second author was funded by the Centre International de Math\'{e}matiques et d'Informatique de Toulouse (CIMI) through contract ANR-11-LABEX-0040-CIMI within the program ANR-11-IDEX-0002-02
and the Alexander von Humboldt Foundation.
}}
\author[Pedro Cabalar, Jorge Fandinno and Luis {Fari\~{n}as del Cerro}]{PEDRO CABALAR\\
University of Corunna, Spain\\
\email{cabalar@udc.es}
\and 
JORGE FANDINNO\\
Universit\"{a}t Potsdam, GERMANY\\
\email{fandinno@uni-potsdam.de}
\and
LUIS {FARI\~NAS~DEL~CERRO}\\
IRIT, University of Toulouse, CNRS, France\\
 \email{farinas@irit.fr}
}
\begin{document}
\maketitle

\begin{abstract}
Epistemic logic programs constitute an extension of the stable model semantics to deal with new constructs called \emph{subjective literals}.
Informally speaking, a subjective literal allows checking whether some objective literal is true in all or some stable models.
As it can be imagined, the associated semantics has proved to be non-trivial, since the truth of subjective literals may interfere with the set of stable models it is supposed to query.
As a consequence, no clear agreement has been reached and different semantic proposals have been made in the literature.
Unfortunately, comparison among these proposals has been limited to a study of their effect on individual examples, rather than identifying general properties to be checked.
In this paper, we propose an extension of the well-known splitting property for logic programs to the epistemic case.
We formally define when an arbitrary semantics satisfies the \emph{epistemic splitting property} and examine some of the consequences that can be derived from that, including its relation to conformant planning and to epistemic constraints.
Interestingly, we prove (through counterexamples) that most of the existing approaches fail to fulfill the epistemic splitting property, except the original semantics proposed by Gelfond in 1991 and a recent proposal by the authors, called \emph{Founded Autoepistemic Equilibrium Logic}.
\end{abstract}

\section{Introduction}

The language of \emph{epistemic specifications}, proposed by~\citeN{gelfond91a}, constituted an extension of disjunctive logic programming that introduced modal operators to quantify over the set of stable models~\cite{gellif88b} of a program.
These new constructs were later incorporated as an extension of the Answer Set Programming (ASP) paradigm in different implemented solvers (see~\citeNP{leckah18} for a recent survey).
The new constructs, \emph{subjective literals}, have the form~$\K l$ and~$\M l$ and allow respectively checking whether an objective literal~$l$ is true in every stable model (cautious consequence) or in some stable model (brave consequence).
In many cases, these subjective literals can be seen as simple queries, but what makes them really interesting is their use in rule bodies, which may obviously affect the set of stable models they are meant to quantify.
This feature makes them suitable for modelling introspection (reasoning about the knowledge and lack of knowledge that the system possesses rather than reasoning exclusively about the facts themselves) but, at the same time, easily involves cyclic specifications whose intuitive behaviour is not always easy to define.
For instance, the semantics of an epistemic logic program may yield alternative sets of stable models, each set being called a \emph{world view}.
Deciding the intuitive world views of a cyclic specification has motivated a wide debate in the literature.
In fact, in Gelfond's original \mbox{semantics (G91;~\citeNP{gelfond91a})} or in its extensions to arbitrary propositional formulas~\cite{wanzha05a,truszczynski11}, some cyclic examples manifested self-supportedness, so \citeN{gelfond11a} himself and, later on, other authors~\cite{kawabagezh15,faheir15a,sheeit17a,cafafa19b} proposed different variants trying to avoid unintended results.
Unfortunately, comparison among these variants was limited to studying their effect on a set of ``test'' examples,
leading to a lack of
confidence as any proposal is always subject to the appearance of new counterintuitive examples.
A next methodological step would consist in defining formal properties to be established
and that
would cover  complete families of examples and, hopefully, could help to reach an agreement on some language fragments.
For instance, one would expect that, at least, the existing approaches agreed on their interpretation of acyclic specifications.
Regretfully, as we will see, this is not the~case.

In this paper we propose a candidate property, namely \emph{epistemic splitting}, that not only defines an intuitive behaviour for stratified epistemic specifications but also goes further, extending the splitting theorem, well-known for standard logic programs~\cite{liftur94a}, to the case of epistemic logic programs.
In fact, the idea of splitting goes back to the result in~\cite{gelprz92a} for the case of Moore's autoepistemic logic~\cite{moore85}.
Informally speaking, we say that an epistemic logic program can be split if a part of the program (the \emph{top}) only refers to the atoms of the other part (the \emph{bottom}) through subjective literals.
A given semantics satisfies epistemic splitting if, given any split program, it is possible to get its world views by first obtaining the world views of the bottom and then using the subjective literals in the top as ``queries'' on the bottom part previously obtained.
If epistemic splitting holds, the semantics immediately satisfies other properties.
For instance, if the use of epistemic operators is stratified, the program has a unique world view at most.
Similarly, epistemic constraints (those only consisting of subjective literals) can be guaranteed to only rule out candidate world views.
However, we will see that, among the previously cited approaches, only the G91 semantics and the recently proposed \emph{Founded Autoepistemic Equilibrium Logic}~\cite{cafafa19b} satisfy epistemic splitting.
So, somehow, most of the recent attempts to fix the behaviour of cycles have neglected the attention on the effects produced on acyclic specifications.
In fact, a different property of epistemic splitting was already proved in~\cite{watson00} as a method to compute world views for the G91 semantics. 
However, that definition was based on a ``safety'' condition that needed to be checked for all possible world views and is specific for~G91 semantics, so it is harder to justify as a general property required for other approaches.

The rest of the paper is organised as follows.
First, we motivate the main idea through a \mbox{well-known} example.
After that, we recall basic definitions of (non-epistemic) ASP and splitting,
introduce the language of epistemic specifications and define the G91 semantics.
In the next section, we proceed to define the property of epistemic splitting and study some of its consequences.
Then, we formally prove that G91 satisfies this property while we provide counterexamples for the approaches that do not satisfy it.
Finally, before concluding the paper, we show how epistemic splitting can be applied to simplify the formalisation of conformant planning problems when representing them as epistemic logic programs.

\section{Motivation}\label{sec:motiv}

To illustrate the intuition behind our proposal, let us consider the following well-known standard example introduced in~\cite{gelfond91a}.

\begin{example}\label{ex:college}
A given college uses the following set of rules to decide whether a student $X$ is eligible for a scholarship:
\begin{eqnarray}
\eligible(X) & \leftarrow & \high(X)   \label{ex:college.1} \\
\eligible(X) & \leftarrow & \minority(X),\, \fair(X) \label{ex:college.2}\\
\sneg\eligible(X) & \leftarrow & \sneg\fair(X),\, \sneg\high(X) \label{ex:college.3}
\end{eqnarray}
Here, `$\sneg$' stands for strong negation and $\high(X)$ and $\fair(X)$ refer to the grades of student $X$.
We want to encode the additional college criterion
``\emph{The students whose eligibility is not determined by the college rules should be interviewed by the scholarship committee}'' 
as another rule in the program.\qed
\end{example}

\noindent The problem here is that, for deciding whether $eligible(X)$ ``\emph{can be determined},'' we need to check if it holds in all the answer sets of the program, that is, if it is one of the cautious consequences of the latter.
For instance, if the only available information for some student $mike$ is the disjunction
\begin{gather}
\fair(mike) \vee \high(mike) \label{ex:college.4}
\end{gather}
we get that program \mbox{$\set{\!\eqref{ex:college.1} \text{-} \eqref{ex:college.4}\!}$} has the following two stable models:
\begin{gather}
\set{ \high(mike), \eligible(mike) }
	\label{f:sm1.pre}
\\
\set{ \fair(mike) }
	\label{f:sm2.pre}
\end{gather}
so $\eligible(mike)$ cannot be determined and an interview should follow.
Of course, if we just want to query cautious and brave consequences of the program, we can do it inside ASP.
For instance, the addition of constraint:
\begin{eqnarray*}
\bot \leftarrow \eligible(mike)
\end{eqnarray*}
allows us to decide if $\eligible(mike)$ is a cautious consequence by just checking that the resulting program has no answer sets.
The difficulty comes from the need to \emph{derive} new information from a cautious consequence.
This is where subjective literals come into play.
Rule
\begin{eqnarray}
\begin{IEEEeqnarraybox}{rCl}
\interview(X) &\leftarrow  &\Not\!\K \eligible(X),\,
\Not\!\K \sneg \eligible(X) 
\end{IEEEeqnarraybox}
\label{ex:college.5}
\end{eqnarray}
allows us to prove that $\interview(X)$ holds whenever neither $\eligible(X)$ nor $\sneg eligible(X)$ are cautious consequences of \mbox{$\set{\eqref{ex:college.1} \text{-} \eqref{ex:college.4}}$}.
Recall that $\K l$ holds when the literal $l$ is true in all stable models of the program.
The novel feature here is that \eqref{ex:college.5} is also part of the program, and so, it affects the answer sets queried by $\K$ too, which would actually be:
\begin{eqnarray}
&\{\fair(mike),\interview(mike)\}& \label{f:sm1}\\
&\{\high(mike), \eligible(mike),\interview(mike)\} &\label{f:sm2}
\end{eqnarray}
So, there is a kind of cyclic reasoning: operators $\K$ and $\M$ are used to query a set of stable models that, in their turn, may depend on the application of that query.
In the general case, this kind of cyclic reasoning is solved by resorting to multiple world views, but in our particular example, however, this does not seem to be needed. 
One would expect that separating the queried part \mbox{$\set{\eqref{ex:college.1} \text{-} \eqref{ex:college.4}}$} and the rule that makes the query \eqref{ex:college.5} should be correct, since the first four rules do not depend on \eqref{ex:college.5} and the latter exclusively consults them without interacting with their results.
A similar line of reasoning could be applied if we added one more level such as, for instance, by including the rule:
\begin{eqnarray}
\appointment(X) \leftarrow  \K \interview(X) \label{ex:college.6}
\end{eqnarray}
The two answer sets of program~\mbox{$\set{\eqref{ex:college.1} -\eqref{ex:college.5}}$} contain $\interview(mike)$ and so $\appointment(mike)$ can be added to both answer sets incrementally.
This method of analysing a program by division into independent parts shows a strong resemblance to the \emph{splitting theorem}, well-known for standard ASP.
Splitting is applicable when the program can be divided into two parts, the \emph{bottom} and the \emph{top}, in such a way that the bottom never refers to head atoms in the top.
When this happens, we can first compute the stable models of the bottom and then, for each one, simplify the top accordingly, getting new stable models that complete the information.
We could think about different ways of extending this method for the case of epistemic logic programs, depending on how restrictive we want to be on the programs where it will be applicable.
However, we will choose a very conservative case, looking for a wider agreement on the proposed behaviour.
The condition we will impose is that our top program can only refer to atoms in the bottom through epistemic operators.
In this way, the top is seen as a set of rules that derive facts from epistemic queries on the bottom.
Thus, each world view $\wv$ of the bottom will be used to replace the subjective literals in the top by their truth value with respect to $\wv$.
For the sake of completeness, we recall next the basic definitions of ASP and splitting, to proceed with a formalisation of epistemic splitting afterwards.

\section{Background of ASP and Epistemic Specifications}
\label{sec:back}

Given a set of atoms $\At$, an \emph{objective literal} is either an atom or a truth constant\footnote{For a simpler description of program transformations, we allow truth constants where $\top$ denotes true and $\bot$ denotes false.}, that is \mbox{$a \in \At \cup \{\top,\bot\}$}, or an atom preceded by one or two occurrences of default negation,
\mbox{$\Not a$}.
An \emph{objective rule} $r$ is an implication of the form:
\begin{gather}
a_1 \vee \dots \vee a_n \leftarrow L_1, \dots, L_m
	\label{eq:rule}
\end{gather}
with $n\geq 0$ and $m\geq 0$, where each $a_i \in \At$ is an atom and each $L_j$ an objective literal.
The left hand disjunction of \eqref{eq:rule} is called the rule \emph{head} and abbreviated as $\Head(r)$.
When $n=0$, it corresponds to $\bot$ and $r$ is called a \emph{constraint}.
The right hand side of \eqref{eq:rule} is called the rule \emph{body} and abbreviated as $\Body(r)$.
When $m=0$, the body corresponds to $\top$ and $r$ is called a \emph{fact} (in this case, the body and the arrow symbol are usually omitted).
An \emph{objective program} $\Pi$ is a (possibly infinite) set of objective rules.
We write $\Atoms(F)$ to represent the set of atoms occurring in any syntactic construct $F$ (a literal, head, body, rule or program).
By abuse of notation, we will sometimes respectively write $\Head(r)$ and $\Body(r)$ instead of
$\Atoms(\Head(r))$ and $\Atoms(\Body(r))$ when it is clear by the context.
A propositional interpretation $I$ is a set of atoms.
We assume that strong negation `$\sneg a$' is just another atom in $\At$ and that the constraint
$\bot \leftarrow a, \sneg a$
is implicitly included in the program.
We allow the use of variables, but understood as abbreviations of their possible ground instances.
Given any syntactic construct $F$, we write $I \models F$ to stand for ``$I$ satisfies $F$'' in classical propositional logic, where the commas correspond to conjunctions, `$\Not \!\!$' corresponds (under this interpretation) to classical negation and `$\leftarrow$' is just a reversed material implication.
An interpretation $I$ is a \emph{(classical) model} of an objective program $\Pi$ if it satisfies all its rules.
The \emph{reduct} of a program $\Pi$ with respect to some propositional interpretation~$I$,
in symbols $\Pi^I$, is obtained by replacing in $\Pi$ every negative literal $\Not a$ by $\top$ if $I \models \Not a$ or by $\bot$ otherwise.
A propositional interpretation~$I$ is a \emph{stable model} of a program $\Pi$ iff it is a $\subseteq$-minimal model of $\Pi^I$.
By $\SM[\Pi]$, we denote the set of all stable models of $\Pi$.
The following is a well-known property in ASP.

\begin{property}[Supraclassicality]\label{prop:supraclass}
Any stable model of any objective program $\Pi$ is also a classical model of $\Pi$. 
\end{property}

We recall next the splitting theorem for ASP, beginning with the following definition.
\begin{definition}[Splitting set]\label{def:splitset}
A set of atoms $U \subseteq \At$ is a \emph{splitting set} of a program $\Pi$ if, for each rule $r \in \Pi$, one of the following conditions hold
\begin{enumerate}[ label=(\roman*), leftmargin=15pt]
\item $\Atoms(r) \subseteq U$, or
\item $\Head(r) \cap U = \emptyset$.
\end{enumerate}
When this happens, we identify two disjoint subprograms, the \emph{bottom} and the \emph{top}, respectively defined as
\begin{IEEEeqnarray*}{lCl ?C? lCl +x*}
b_U(\Pi) & \eqdef & \setm{ r \in \Pi}{ \Atoms(r) \subseteq U }
&\hspace{1.5cm}&
t_U(\Pi) & \eqdef & \Pi \setminus b_U(\Pi)  &\qed
\end{IEEEeqnarray*}
\end{definition}
As an example, consider program $\Pi_1$:
\begin{IEEEeqnarray}{cCl ?C? rCl}
a & \leftarrow & \Not b \label{f:ex1.1}
&\hspace{2cm}&
b & \leftarrow & \Not a \label{f:ex1.2}\\
c \vee d & \leftarrow & \Not a \label{f:ex1.3}
&\hspace{2cm}&
d & \leftarrow & a, \Not b \label{f:ex1.4}
\end{IEEEeqnarray}
It is easy to see that $U=\{a,b\}$ is a splitting set that divides the program into two parts: the bottom $b_U(\Pi_1)
$ containing the rules~\eqref{f:ex1.1} and the top $t_U(\Pi_1)$ containing the rules~\eqref{f:ex1.3}.
The keypoint of splitting is computing stable models of $b_U(\Pi)$ alone and using each one, $I$, to simplify $t_U(\Pi)$ accordingly.
Given a splitting set $U$ for $\Pi$ and an interpretation $I \subseteq U$, we define the program $e_U(\Pi,I)$ as a transformation of the top program, $t_U(\Pi)$, where we replace each atom $a \in U$ from the splitting set by: $\top$ if $a \in I$ or $\bot$ otherwise.
A pair $\tuple{I_b,I_t}$ is said to be a \emph{solution} of $\Pi$ with respect to $U$
iff $I_b$ is a stable model of $b_U(\Pi)$
and $I_t$ is a stable model of $e_U(\Pi,I_b)$.
For instance, for $\Pi_1$, the bottom has two answer sets $\{a\}$ and $\{b\}$, so we get the respective simplifications $e_U(\Pi_1,\{a\})$:
\begin{eqnarray*}
c \vee d \leftarrow \Not \top\hspace{30pt}
d \leftarrow \top, \Not \bot
\end{eqnarray*}
and $e_U(\Pi_1,\{b\})$:
\begin{eqnarray*}
c \vee d \leftarrow \Not \bot\hspace{30pt}
d \leftarrow \bot, \Not \top
\end{eqnarray*}
The former has stable model $\{d\}$ so $\tuple{\{a\},\{d\}}$ is one solution.
The latter has stable models $\{c\}$ and $\{d\}$ that yield other two solutions $\tuple{\{b\},\{c\}}$ and $\tuple{\{b\},\{d\}}$.

\begin{theorem}[From~\protect\citeNP{liftur94a}]\label{thm:nm.splitting}
Let $U$ be a splitting set of program~$\Pi$.
A propositional interpretation $I \subseteq \text{\em \At}$ is a stable model of $\Pi$ iff 
there is a solution $\tuple{I_b,I_t}$ of $\Pi$ w.r.t~$U$ such that $I = I_b \cup I_t$.\qed
\end{theorem}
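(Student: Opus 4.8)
The plan is to prove the two directions of the biconditional separately, relying on the characterization of stable models as $\subseteq$-minimal models of the reduct and on the structure imposed by the splitting set. Throughout, I fix a splitting set $U$ of $\Pi$, write $B = b_U(\Pi)$ and $T = t_U(\Pi)$, and use the defining property that every rule of $T$ has its head atoms outside $U$, while every rule of $B$ mentions only atoms of $U$. The central bookkeeping fact I would establish first is a \emph{reduct-splitting lemma}: for any interpretation $I$ partitioned as $I = I_b \cup I_t$ with $I_b = I \cap U$ and $I_t = I \setminus U$, the reduct $\Pi^I$ decomposes as $B^{I_b} \cup (t_U(\Pi))^I$, and moreover the latter top-reduct, once we substitute the truth values of the $U$-atoms according to $I_b$, coincides up to the $\top/\bot$ simplification with $(e_U(\Pi,I_b))^{I_t}$. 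This is the technical heart that lets me pass freely between the reduct of the whole program and the reducts of the two pieces.

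\medskip

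\noindent\textbf{Soundness ($\Leftarrow$).} Suppose $\tuple{I_b,I_t}$ is a solution and set $I = I_b \cup I_t$. First I would check that $I$ is a model of $\Pi^I$. For rules in $B$ this is immediate since $I_b$ models $B^{I_b}$ and these rules only see $U$-atoms; for rules in $T$ I would use that $I_t$ models $(e_U(\Pi,I_b))^{I_t}$ together with the reduct-splitting lemma to lift satisfaction back to $(t_U(\Pi))^I$. Then I would argue $\subseteq$-minimality: if some $J \subsetneq I$ were also a model of $\Pi^I$, I would split $J = J_b \cup J_t$ with $J_b = J \cap U$ and $J_t = J \setminus U$. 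Because $B$ only constrains $U$-atoms, $J_b$ would be a model of $B^{I_b}$, and minimality of $I_b$ forces $J_b = I_b$; then $J_t$ would be a model of the simplified top-reduct, and minimality of $I_t$ forces $J_t = I_t$, contradicting $J \subsetneq I$.

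\medskip

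\noindent\textbf{Completeness ($\Rightarrow$).} Conversely, let $I$ be a stable model of $\Pi$ and define $I_b = I \cap U$, $I_t = I \setminus U$. I would first show $I_b$ is a stable model of $B$: it models $B^{I_b}$ by restriction, and any strictly smaller model of $B^{I_b}$ could be extended by $I_t$ to undercut the minimality of $I$ over $\Pi^I$, again invoking the reduct-splitting lemma so that the top rules remain satisfied. Then I would show $I_t$ is a stable model of $e_U(\Pi,I_b)$ by the symmetric argument, this time shrinking only the top component and appealing to minimality of $I$.

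\medskip

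\noindent The step I expect to be the main obstacle is the reduct-splitting lemma and its careful use in the minimality arguments, particularly handling the $e_U$ substitution of $\top/\bot$ correctly: one must verify that replacing a $U$-atom $a$ by $\top$ or $\bot$ in the top and then taking the reduct yields exactly the same minimal models as taking the reduct of $t_U(\Pi)$ under an interpretation that already fixes the $U$-atoms via $I_b$. The double negation $\Not\Not a$ permitted by the objective-literal definition, and the presence of truth constants, add minor case bookkeeping here. Once the lemma is in place, both directions are routine minimal-model manipulations, so I would invest the effort in stating and proving that decomposition cleanly.
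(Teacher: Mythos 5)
The paper never proves this theorem: it is imported verbatim from Lifschitz and Turner (1994) and stated with a \qed, so there is no internal proof to compare against and your argument must stand on its own. Structurally it does. The route you take (establish that $\Pi^I$ decomposes as $b_U(\Pi)^{I_b} \cup t_U(\Pi)^I$, relate $t_U(\Pi)^I$ to $(e_U(\Pi,I_b))^{I_t}$ via the $\top/\bot$ substitution, then run minimal-model arguments in both directions) is the standard proof of the splitting theorem. Your soundness direction is correct as sketched: once a smaller model $J \subsetneq I$ of $\Pi^I$ is split as $J_b \cup J_t$, minimality of $I_b$ pins $J_b = I_b$, and only then does your lemma legitimately transfer $J \models t_U(\Pi)^I$ into $J_t \models (e_U(\Pi,I_b))^{I_t}$, after which minimality of $I_t$ finishes the contradiction.

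The one step whose justification does not go through as written is in the completeness direction, when you shrink the bottom: for $J_b \subsetneq I_b$ a model of $b_U(\Pi)^{I_b}$, you claim that $J = J_b \cup I_t$ still satisfies the top rules ``again invoking the reduct-splitting lemma.'' But the lemma, as you state it, only relates satisfaction by interpretations whose trace on $U$ is exactly $I_b$ --- that is what makes the evaluation of positive $U$-atoms in $t_U(\Pi)^I$ agree with the $\top/\bot$ substitution performed in $e_U(\Pi,I_b)$ --- and here $J \cap U = J_b \neq I_b$. The correct argument at this point is different, and it is precisely where condition (ii) of the splitting-set definition earns its keep: after the reduct, bodies of rules in $t_U(\Pi)^I$ consist only of positive atoms and truth constants, hence are monotone; so if a body is true under $J \subseteq I$ it is also true under $I$, whence some head atom belongs to $I$; since heads of top rules are disjoint from $U$, that atom lies in $I_t \subseteq J$, and the rule is satisfied by $J$. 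This is a local, fixable repair rather than a collapse of the approach, but without it the contradiction with the minimality of $I$ over $\Pi^I$ does not follow from anything you have established, and it is exactly the place where a ``wrong'' splitting (one violating the head condition) would make the theorem fail.
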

Given the three solutions we obtained before, the splitting theorem guarantees that $\{a\} \cup \{d\}$,  $\{b\} \cup \{c\}$ and $\{b\}\cup\{d\}$ are the three stable models of our example program $\Pi_1$.

One interesting observation is that any constraint $r$ with \mbox{$\Atoms(r) \subseteq U$}
\textbf{}is now included in the bottom $b_U(\Pi)$ but also satisfies condition (ii) in Def.~\ref{def:splitset} (it has no head atoms at all) and could be moved to the top $t_U(\Pi)$ instead.
Having this in mind, let us provide now a relaxed definition of bottom and top programs in the following way.
We say that the pair $\tuple{\hat{b}_U(\Pi),\hat{t}_U(\Pi)}$ is an \emph{arbitrary splitting} of program $\Pi$ with respect to splitting set $U$ if: $\hat{b}_U(\Pi) \cap \hat{t}_U(\Pi) = \emptyset$, $\hat{b}_U(\Pi) \cup \hat{t}_U(\Pi) = \Pi$, all rules in $\hat{b}_U(\Pi)$ satisfy condition~(i) in Definition~\ref{def:splitset} and all rules in $\hat{t}_U(\Pi)$ satisfiy condition~(ii) in Definition~\ref{def:splitset}.
With this definition, constraints on atoms in $U$ can be arbitrarily placed in $\hat{b}_U(\Pi)$ or in $\hat{t}_U(\Pi)$.

\begin{corollary}\label{cor:nm.splitting}
Theorem~\ref{thm:nm.splitting} still holds if we define $b_U(\Pi)$ and $t_U(\Pi)$ to be any arbitrary splitting $\tuple{\hat{b}_U(\Pi),\hat{t}_U(\Pi)}$ of program $\Pi$ w.r.t. splitting set $U$.
\qed
\end{corollary}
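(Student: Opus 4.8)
The plan is to reduce the statement to Theorem~\ref{thm:nm.splitting} by showing that passing from the canonical splitting $\tuple{b_U(\Pi),t_U(\Pi)}$ to an arbitrary one $\tuple{\hat{b}_U(\Pi),\hat{t}_U(\Pi)}$ leaves the collection of solutions unchanged. The first step is to pin down exactly how the two splittings can differ. A rule satisfying condition~(i) but not~(ii) of Definition~\ref{def:splitset} has $\Head(r)\cap U\neq\emptyset$ and must lie in the bottom of \emph{every} splitting; a rule satisfying~(ii) but not~(i) must lie in the top of every splitting. A rule satisfying both has $\Head(r)\subseteq U$ and $\Head(r)\cap U=\emptyset$, hence $\Head(r)=\emptyset$: it is a constraint over $U$. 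Thus the only freedom is in the placement of the set $C$ of constraints $r$ with $\Atoms(r)\subseteq U$, all of which sit in $b_U(\Pi)$ in the canonical case. Writing $C_t \eqdef C \cap \hat{t}_U(\Pi)$ for those moved to the top, I obtain $\hat{b}_U(\Pi)=b_U(\Pi)\setminus C_t$ and $\hat{t}_U(\Pi)=t_U(\Pi)\cup C_t$.

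Next I would invoke the standard fact that constraints only filter stable models: for any program $P$ and any set $D$ of constraints, $\SM[P\cup D]=\setm{I\in\SM[P]}{I\models D}$. This follows from the definition of reduct, since a stable model must classically satisfy each constraint, and adding an already-satisfied constraint neither creates new minimal models nor destroys existing ones. Applied to $P=\hat{b}_U(\Pi)$ and $D=C_t$, this yields $\SM[b_U(\Pi)]=\setm{I_b\in\SM[\hat{b}_U(\Pi)]}{I_b\models C_t}$.

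The third step analyses the simplified top. Since the atoms of $C_t$ all lie in $U$, the transformation $e_U$ turns each $c\in C_t$ into a ground constraint whose body is a conjunction of $\top$ and $\bot$, and $e_U(\hat{t}_U(\Pi),I_b)=e_U(t_U(\Pi),I_b)\cup e_U(C_t,I_b)$. I would then split into two cases. If $I_b\models C_t$, then for each $c\in C_t$ some body literal is false under $I_b$, so $e_U(c,I_b)$ has a $\bot$ in its body and is trivially satisfied; such constraints (empty head, always true) change neither the models nor their minimality, whence $\SM[e_U(\hat{t}_U(\Pi),I_b)]=\SM[e_U(t_U(\Pi),I_b)]$. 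If instead $I_b\not\models C_t$, some $c\in C_t$ has all body literals true under $I_b$, so $e_U(c,I_b)$ is the unsatisfiable constraint $\bot\leftarrow\top$ and $\SM[e_U(\hat{t}_U(\Pi),I_b)]=\emptyset$.

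Putting these together, a pair $\tuple{I_b,I_t}$ is a solution with respect to the arbitrary splitting iff $I_b\in\SM[\hat{b}_U(\Pi)]$ and $I_t\in\SM[e_U(\hat{t}_U(\Pi),I_b)]$. By the case analysis the second condition forces $I_b\models C_t$, whence $I_b\in\SM[b_U(\Pi)]$ and $I_t\in\SM[e_U(t_U(\Pi),I_b)]$, that is, $\tuple{I_b,I_t}$ is a canonical solution; conversely Property~\ref{prop:supraclass} (or the filtering lemma above) guarantees $I_b\models C_t$ for every $I_b\in\SM[b_U(\Pi)]$, so each canonical solution is an arbitrary one. The two notions of solution therefore coincide as sets of pairs, and Theorem~\ref{thm:nm.splitting} gives the result. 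I expect the main obstacle to be the bookkeeping in the third step: verifying carefully, via the reduct, that a trivially satisfied constraint leaves $\SM$ untouched while an unsatisfiable one empties it, and aligning this with the filtering lemma so that the two solution sets match as sets of pairs rather than merely as sets of unions $I_b\cup I_t$.
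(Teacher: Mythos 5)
Your proof is correct, and it follows essentially the same route the paper intends: the paper states this corollary without proof, justifying it only by the observation that the rules movable between bottom and top are exactly the constraints over $U$, which (by supraclassicality and the standard filtering behaviour of constraints) cannot change the set of solutions. Your write-up simply makes that implicit argument rigorous --- identifying $C_t$, proving the filtering lemma, and checking that a constraint reduced to a trivially false body leaves $\SM$ unchanged while a violated one empties it --- so it matches the paper's reasoning in full.
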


Note also that, given an interpretation~$I$,
the solution~$\tuple{I_b,I_t}$ is unique and satisfies ${I_b = I \cap U}$ and ${I_t = I \setminus U}$.
As a result,
we obtain the following rewriting of Theorem~\ref{thm:nm.splitting}:

\begin{corollary}\label{cor:nm.splitting2}
Let $U$ be a splitting set of program~$\Pi$.
A propositional interpretation $I \subseteq \text{\em \At}$ is a stable model of $\Pi$ iff 
$\tuple{I\cap U,I \setminus U}$ is a solution of $\Pi$ w.r.t~$U$.\qed
\end{corollary}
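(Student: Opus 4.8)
The plan is to read the corollary off Theorem~\ref{thm:nm.splitting} once I determine the exact components of any solution. The key preliminary observation is a pair of disjointness facts concerning the shape of solutions. First, I would note that every rule of $b_U(\Pi)$ satisfies condition~(i) of Definition~\ref{def:splitset}, so $\Atoms(b_U(\Pi)) \subseteq U$; since every stable model of a program contains only atoms occurring in that program, each stable model $I_b$ of $b_U(\Pi)$ satisfies $I_b \subseteq U$. Second, the transformation $e_U(\Pi,I_b)$ replaces every atom of $U$ in $t_U(\Pi)$ by $\top$ or $\bot$, so no atom of $U$ occurs in $e_U(\Pi,I_b)$; hence any stable model $I_t$ of $e_U(\Pi,I_b)$ satisfies $I_t \cap U = \emptyset$.

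With these two facts in hand, for an arbitrary solution $\tuple{I_b,I_t}$ of $\Pi$ w.r.t.\ $U$ I can recover its two components from the union $I = I_b \cup I_t$: because $I_b \subseteq U$ while $I_t \cap U = \emptyset$, intersecting with $U$ yields $I \cap U = I_b$ and removing $U$ yields $I \setminus U = I_t$. This makes the decomposition of $I$ induced by a solution forced, which is precisely the uniqueness remark stated just before the corollary.

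Both directions then reduce to Theorem~\ref{thm:nm.splitting}. For the forward direction, if $I$ is a stable model of $\Pi$, the theorem supplies a solution $\tuple{I_b,I_t}$ with $I = I_b \cup I_t$, and by the observation above $I_b = I \cap U$ and $I_t = I \setminus U$, so $\tuple{I \cap U, I \setminus U}$ is a solution. For the converse, if $\tuple{I \cap U, I \setminus U}$ is a solution, then it is a solution whose union equals $(I \cap U) \cup (I \setminus U) = I$, so the theorem gives that $I$ is a stable model of $\Pi$.

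I do not expect a genuine obstacle, since the statement is essentially a restatement of Theorem~\ref{thm:nm.splitting}. The only point needing care is the justification of the two disjointness facts $I_b \subseteq U$ and $I_t \cap U = \emptyset$ from the definitions of $b_U(\Pi)$ and $e_U(\Pi,I_b)$ together with the fact that stable models introduce no atom outside the program's signature; once these hold, the remainder is pure bookkeeping on set intersections and differences.
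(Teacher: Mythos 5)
Your proof is correct and takes the same route as the paper: the corollary is stated there as an immediate rewriting of Theorem~\ref{thm:nm.splitting}, relying on the preceding remark that any solution $\tuple{I_b,I_t}$ satisfies $I_b = I \cap U$ and $I_t = I \setminus U$. Your two disjointness facts ($I_b \subseteq U$ since $\Atoms(b_U(\Pi)) \subseteq U$, and $I_t \cap U = \emptyset$ since $e_U(\Pi,I_b)$ contains no atoms of $U$) are exactly the justification of that remark, which the paper leaves implicit, so your write-up is if anything slightly more complete.
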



We extend now the syntax of ASP to the language of epistemic specifications.
Any expression of the form $\K l$, $\M l$, $\Not \K l$ or $\Not \M l$,
with $l$ an objective literal,
is called a \emph{subjective literal}.
We keep the same syntax for rules as in \eqref{eq:rule} except that body literals $L_j$ can also be subjective literals now.
Given rule $r$, we define the sets $\Bodyr(r)$ and $\Bodym(r)$ respectively containing the objective and the subjective literals in $\Body(r)$.
We say that a rule is a \emph{subjective constraint} if it is a constraint (that is, ${\Head(r)=\bot}$) and its body exclusively consists of subjective literals (that is, $\Body(r)=\Bodym(r)$\,).

We can define the concept of \emph{model} of a program, in a similar way as we did for classical models in objective ASP.
A \emph{modal interpretation} $\sM = \tuple{\wv,I}$ is pair where $I$ is a propositional interpretation and $\wv \subseteq 2^{\At}$ is a non-empty set of propositional interpretations.
A modal interpretation $\sM = \tuple{\wv,I}$ \emph{satisfies} a literal if one of the following conditions hold
\begin{enumerate}
\item $\tuple{\wv,I} \models \top$,
\item $\tuple{\wv,I} \not\models \bot$,
\item $\tuple{\wv,I} \models a$ if $a \in I$, for any atom $a \in \At$,
\item $\tuple{\wv,I} \models \K l$ if $\tuple{\wv,I'} \models l$ for all $I' \in \wv$,
\item $\tuple{\wv,I} \models \M l$ if $\tuple{\wv,I'} \models l$ for some $I' \in \wv$, and
\item $\tuple{\wv,I} \models \Not L$ if $\tuple{\wv,I} \not\models L$.
\end{enumerate}
For a subjective literal $L$, $\tuple{\wv,I} \models L$ does not depend on $I$, so we sometimes write
just $\wv \models L$ instead of $\tuple{\wv,I} \models L$.
For a rule $r$ of the form~\eqref{eq:rule}, we write $\tuple{\wv,I} \models r$ iff either
$\tuple{\wv,I} \models a_i$ for some $1 \leq i \leq n$ or $\tuple{\wv,I} \not\models L_j$ for some $1 \leq j \leq m$.
We say that $\tuple{\wv,I}$ is a \emph{model} of a program~$\Pi$, written $\tuple{\wv,I}\models \Pi$, if it satisfies all its rules.
Among the possible models of an epistemic logic program, all semantic approaches agree on selecting some preferred models called \emph{world views}, each one being characterized by the $\wv$ component.
These world views satisfy a similar property to that of supraclassicality (Property~\ref{prop:supraclass}) in non-epistemic ASP.
In this case, however, rather than talking about classical models, we resort to the semantics of modal logic S5, so all world views of a program are also S5 models of the program.
Formally,
we say that~$\wv$ is a S5-model of a program~$\Pi$, in symbols~$\wv \models \Pi$,
when~$\tuple{W,I} \models \Pi$ for every $I \in \wv$.
Then, this property can be formally stated as follows:
\begin{property}[Supra-S5]\label{property:supraS5}
A semantics $\cS$ satisfies \emph{supra-S5} when every $\cS$-world view $\wv$ of an epistemic program $\Pi$ is also a S5-model of~$\Pi$. \qed
\end{property}

Notice that this definition is semantics-dependent in the sense that each alternative semantics~$\cS$ for epistemic specifications may define different $\cS$-world views,
although, to the best of our knowledge, all existing semantics satisfy supra-S5.
Another property that is shared by all semantics is that, when $\Pi$ is an objective ASP program (it has no modal epistemic operators) then it has a unique world view containing all the stable models of $\Pi$.
We will formalize this property in the following way.
\begin{property}[Supra-ASP]\label{property:supraASP}
A semantics $\cS$ satisfies \emph{supra-ASP} if for any objective program $\Pi$ either $\Pi$ has a unique $\cS$-world view $\wv=\SM[\Pi] \neq \emptyset$ or both $\SM[\Pi]=\emptyset$ and $\Pi$ has no world view at all. \qed
\end{property}

Originally, some semantics like \cite{gelfond91a} or \cite{truszczynski11}, allowed empty world views $W=\emptyset$ when the program has no stable models, rather than leaving the program without world views.
Since this feature is not really essential, we exclusively refer to non-empty world views in this paper. 

We define next a useful transformation extending the idea of reduct to epistemic specifications, but generalised for a given signature.

\begin{definition}[Subjective reduct]
The \emph{subjective reduct} of a program $\Pi$ with respect to a set of propositional interpretations~$\wv$ and a signature $U \subseteq \At$,
also written $\Pi^\wv_U$, is obtained by replacing each subjective literal $L$ with $\Atoms(L) \subseteq U$ by; $\top$ if $\wv \models L$ or by $\bot$ otherwise.
When $U=\text{\rm \At}$ we just write $\Pi^\wv$.\qed
\end{definition}
\noindent We use the same notation $\Pi^\wv$ as for the standard reduct, but ambiguity is removed by the type of $\wv$ (a set of interpretations now).
This subjective reduct can be used to define the G91 semantics in the following way.
\begin{definition}[G91-world view]
A non-empty set of interpretations $\wv$ is a \emph{G91-world view} of an epistemic program $\Pi$ if $\wv=\SM[\Pi^\wv]$.\qed
\end{definition}

The definitions of the other semantics studied in the paper are delayed until Section~\ref{sec:related} where we prove one by one whether it satisfies epistemic splitting or not.

\section{Epistemic splitting}

We proceed now to introduce our definition of the epistemic splitting property.
To do so, we begin by extending the idea of splitting set from~\cite{liftur94a}.
\begin{definition}[Epistemic splitting set]\label{def:splitting}
A set of atoms \mbox{$U \subseteq \At$} is said to be an \emph{epistemic splitting set} of a program $\Pi$ if for any rule $r$ in $\Pi$ one of the following conditions hold
\begin{enumerate}[ label=(\roman*), leftmargin=18pt]
\item $\Atoms(r) \subseteq U$,
    \label{item:1:def:splitting}
\item 
$(\Bodyr(r) \cup \Head(r)) \cap U = \emptyset$.
\label{item:3:def:splitting}%
\end{enumerate}
We define a \emph{splitting} of 
$\Pi$ as a pair $\tuple{B_U(\Pi),T_U(\Pi)}$ satisfying $B_U(\Pi) \cap T_U(\Pi) = \emptyset$ and $B_U(\Pi) \cup T_U(\Pi) = \Pi$,
and also that all rules in $B_U(\Pi)$ satisfy (i) and all rules in $T_U(\Pi)$ satisfy (ii).
\qed
\end{definition}

\noindent 
With respect to the original definition of splitting set, we have replaced the condition  for the top program, $\Head(r) \cap U = \emptyset$, by the new condition~(ii) which
essentially means that the top program may only refer to atoms~$U$ in the bottom through epistemic operators.
Note that this introduces a new kind of ``dependence,'' so that, as happens with head atoms, objective literals in the body also depend on atoms in subjective literals.
For instance, if $U=\{p,q\}$, the program
$\newprogram\label{prg:dependence} = \set{ p \vee q  \ , \ 
s \leftarrow p, \K q}$
would not be splittable due to the second rule, since $s \not\in U$ and we would also need the objective literal $p \not\in U$.
The reason for this restriction is to avoid imposing (to a potential semantics) a fixed way of evaluating $p$ with respect to the world view $[\{p\},\{q\}]$ for the bottom.

Another observation is that we have kept the definition of $B_U(\Pi)$ and $T_U(\Pi)$ non-deterministic in the sense of the arbitrary splitting from the previous section: some rules can be arbitrarily included in one set or the other.
These rules correspond to subjective constraints on atoms in~$U$, since these are the only cases that may satisfy conditions~(i) and~(ii) simultaneously.


If we retake our example program
\mbox{$\newprogram\label{prg:2} = \set{\eqref{ex:college.1}-\eqref{ex:college.5}}$},
we can see that the set
$U$ consisting of atoms $high(mike), fair(mike),$ $eligible(mike), minority(mike)$ and their corresponding strong negations is an epistemic splitting set that divides the program into a bottom
\mbox{$B_U(\program\ref{prg:2})=\set{\eqref{ex:college.1}-\eqref{ex:college.4}}$}
and top part
\mbox{$T_U(\program\ref{prg:2})=\set{\eqref{ex:college.5}}$}.
As in objective splitting, the idea is computing first the world views of the bottom program $B_U(\Pi)$ and for each one simplifying the corresponding subjective literals in the top program.
Given an epistemic splitting set~$U$ for a program~$\Pi$ and a set of interpretations~$\wv$, we define $E_U(\Pi,\wv) \eqdef T_U(\Pi)^\wv_U$, that is, we make the subjective reduct of the top with respect to $\wv$ and signature $U$.

\begin{definition}
A pair $\tuple{\wv_b,\wv_t}$ is said to be an $\cS$-\emph{solution} of $\Pi$ with respect to an epistemic splitting set $U$ if $\wv_b$ is a $\cS$-world view of $B_U(\Pi)$ and $\wv_t$ is a $\cS$-world view of $E_U(\Pi,\wv_b)$.\qed
\end{definition}

As with the above properties, this definition is semantics-dependent in the sense that each alternative semantics~$\cS$ for epistemic specifications may define its own $\cS$-solutions for a given $U$ and~$\Pi$, since it may define the selected $\cS$-world views for a program in a different way.
Back to our example, notice that $B_U(\Pi_2)$ is an objective program without epistemic operators.
Thus, any semantics satisfying supra-ASP will provide $\wv_b=[\{fair(mike)\}, \{high(mike), eligible(mike)\}]$ as the unique world view for the bottom.
The corresponding simplification of the top would be $E_U(\program\ref{prg:2},\wv_b)$ containing (after grounding) the single rule
$
\interview(mike) \leftarrow \Not \bot, \Not \bot
$.
Again, this program is objective and its unique world view would be $\wv_t=[\{interview(mike)\}]$.
Now, in the general case, to reconstruct the world views for the global program we define the operation:
$$\wv_b \sqcup \wv_t \ \  = \ \ \setm{I_b \cup I_t}{ I_b \in \wv_b  \text{ and } I_t \in \wv_t  }$$
(remember that both the bottom and the top may produce multiple world views, depending on the program and the semantics we choose).
In our example, $\wv_b \sqcup \wv_t$ would exactly contain the two stable models $\eqref{f:sm1},\eqref{f:sm2}$ we saw in the introduction.

\begin{property}[Epistemic splitting]
A semantics $\cS$ satisfies \emph{epistemic splitting} if for any epistemic splitting set $U$ of any program $\Pi$: $\wv$ is an $\cS$-world view of $\Pi$ iff there is an $\cS$-solution $\tuple{\wv_b,\wv_t}$ of~$\Pi$ with respect to $U$ such that
$\wv=\wv_b \sqcup \wv_t$. \qed
\end{property}

Back to the example, it can be easily seen that the world view we obtained in two steps is indeed the unique world view of the whole program, under any semantics satisfying epistemic splitting.
Uniqueness of world view was obtained in this case because both the bottom program $B_U(\Pi_2)$ and the top, after simplification, $E_U(\Pi_2,\wv_b)$ were objective programs and we assumed supra-ASP.
In fact, as we see next, we can still get a unique world view (at most) when there are no cyclic dependences among subjective literals.
This mimics the well-known result for \emph{stratified negation} in logic programming.
Let us define a modal dependence relation among atoms in a program $\Pi$ so that $dep(a,b)$ is true iff there is a rule $r \in \Pi$ such that $a \in (\Head(r) \cup \Bodyr(r))$ and $b \in \Bodym(r)$.

\begin{definition}
We say that an epistemic program $\Pi$ is \emph{epistemically stratified} if we can assign an integer mapping $\lambda: \At \to \mathbb{N}$ to each atom such that
\begin{enumerate}
    \item \mbox{$\lambda(a) = \lambda(b)$} for any rule $r \in \Pi$ and atoms $a,b \in (\Atoms(r)\setminus \Bodym(r))$,
    \item \mbox{$\lambda(a)>\lambda(b)$} for any pair of atoms $a,b$ satisfying $dep(a,b)$.\qed
\end{enumerate}
\end{definition}

Take, for instance, the program $\newprogram\label{prg:3}=\{\eqref{ex:college.1}-\eqref{ex:college.5},\eqref{ex:college.6}\}$.
We can assign atoms $\high(mike)$, $\fair(mike)$, $\minority(mike)$ and $\eligible(mike)$ layer 0.
Then $\interview(mike)$ could be assigned layer 1 and, finally, $\appointment(mike)$ can be located at layer 2.
So, $\program\ref{prg:3}$ is epistemically stratified.

\begin{theorem}
Let $\cS$ be  any semantics satisfying supra-ASP and epistemic splitting
and let $\Pi$ be a finite, epistemically stratified program.
Then, if $\Pi$ has some $\cS$-world view, \textbf{}this is unique.\qed
\end{theorem}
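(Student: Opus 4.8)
The plan is to argue by induction on the number of layers used by the stratification mapping~$\lambda$, peeling off the lowest layers with the epistemic splitting property and finishing with supra-ASP once every subjective literal has been evaluated. Since $\Pi$ is finite, only finitely many atoms occur in it and $\lambda$ attains a finite maximum~$n$ on $\Atoms(\Pi)$, so the induction is well-founded. Before the induction proper, I would isolate one key observation: \emph{any program of the form $P \cup C$, with $P$ objective and $C$ a set of subjective constraints, has at most one world view.} To see this, take $U = \At$ and split it as $\tuple{P,C}$; this is a legitimate splitting because every constraint in $C$ trivially satisfies condition~(ii) of Definition~\ref{def:splitting}. By supra-ASP the objective bottom $P$ has at most one world view $\wv_b = \SM[P]$, and the reduced top $C^{\wv_b}_{\At}$ is again objective, so supra-ASP gives it at most one world view $\wv_t$ as well; hence $\wv_b \sqcup \wv_t$ is uniquely determined when it exists.

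Next I would verify that, for each $k$, the set $U_k \eqdef \setm{a}{\lambda(a) \le k}$ is an epistemic splitting set of $\Pi$. For any non-constraint rule $r$, condition~(1) of epistemic stratification forces all atoms of $\Head(r) \cup \Bodyr(r)$ to share one layer $\ell$, while condition~(2), applied through $dep$, forces every atom of $\Bodym(r)$ to lie strictly below $\ell$. Thus if $\ell \le k$ then $\Atoms(r) \subseteq U_k$ (condition~(i)), and if $\ell > k$ then $(\Head(r)\cup\Bodyr(r)) \cap U_k = \emptyset$ (condition~(ii)); subjective constraints meet~(ii) vacuously. The base case $n=0$ is exactly the key observation, since a single-layer program consists only of objective rules together with subjective constraints over layer-$0$ atoms. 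For $n>0$, I take $U = U_{n-1}$; the induced bottom $B_U(\Pi)$ collects the rules of layer at most $n-1$ and is again finite and epistemically stratified with maximum layer $n-1$, so by the induction hypothesis it has at most one world view $\wv_b$. If it has none, the epistemic splitting property leaves $\Pi$ without world views; otherwise $\wv_b$ is forced and every world view of $\Pi$ is of the form $\wv_b \sqcup \wv_t$ with $\wv_t$ a world view of $E_U(\Pi,\wv_b) = T_U(\Pi)^{\wv_b}_U$.

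The crux is then to show that $E_U(\Pi,\wv_b)$ has at most one world view, and here I would emphasise the point that is easy to overlook. Every layer-$n$ non-constraint rule has all its subjective literals over $U$, so the subjective reduct turns it into a purely objective rule; but a subjective constraint may mention atoms of layer $n$, and those subjective literals refer to atoms outside $U$ and therefore survive the reduct. Consequently $E_U(\Pi,\wv_b)$ has precisely the form $P' \cup C'$ of the key observation, with $P'$ objective and $C'$ a set of (possibly partially reduced) subjective constraints over the layer-$n$ atoms. Applying the observation yields at most one world view $\wv_t$, whence $\wv = \wv_b \sqcup \wv_t$ is unique, completing the induction.

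I expect the main obstacle to be precisely this treatment of subjective constraints whose atoms sit at the top layer: unlike ordinary top rules they are \emph{not} eliminated by the subjective reduct when the lower layers are split off, so a naive induction hoping to reduce the top to a fully objective program breaks down. Recognising that such constraints can be separated by a further split (because they satisfy condition~(ii) of Definition~\ref{def:splitting} trivially) and then discharged by supra-ASP is the decisive manoeuvre, which is why I package it as a standalone observation used both in the base case and at the top layer. A secondary point needing care is the bookkeeping of the two stratification conditions when checking that $U_{n-1}$ is an epistemic splitting set, and confirming that the bottom program inherits epistemic stratification so that the induction hypothesis genuinely applies.
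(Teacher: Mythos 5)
Your proof is correct, and its skeleton---peel off strata one at a time using epistemic splitting, then discharge each objective remnant with supra-ASP---is the same as the paper's. But you diverge in one substantive, genuinely valuable way. The paper's proof partitions $\Pi$ bottom-up into layers $\Pi_1',\dots,\Pi_n'$ and asserts that $\Pi_1'$ and every simplified top $E_{U_i}(\Pi_i,\wv_{i-1})$ are \emph{objective} programs, so that supra-ASP applies directly at every stage. That assertion silently skips exactly the case you flag: a subjective constraint has no atom in $\Head(r)\cup\Bodyr(r)$, so the dependence relation $dep$ places no bound on the layers of its subjective atoms; such a constraint may mention top-layer atoms, survives the subjective reduct when the lower layers are split off, and even spoils the paper's claim that the $\Pi_i'$ form a partition (it meets the membership condition of every $\Pi_i'$). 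Your key observation---that a program $P\cup C$, with $P$ objective and $C$ a set of constraints whose only atoms occur in subjective literals, has at most one world view, proved by a second application of splitting with $U=\At$ so that $C$ sits entirely in the top and reduces to an objective program---is precisely the lemma that the paper's own argument tacitly requires, and you deploy it both as the base case and to finish the top layer. So your proof is not merely a rephrasing (the top-down rather than bottom-up peeling is immaterial); it is more careful on the one point where the paper's proof is loose. Only two cosmetic tightenings are needed: state the key observation from the outset for constraints whose objective body literals are truth constants (your ``partially reduced'' case), so that it literally applies to $E_U(\Pi,\wv_b)$ after the reduct; and make explicit that the induction is on the number of occupied layers (or relabel $\lambda$ so layers are contiguous, as the paper does with its ``without loss of generality'' step), so that the base case and the split below the top layer are well defined even when $\lambda$ skips values.
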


\begin{proof}
Let~$\lambda$ be an integer mapping.
We assume without loss of generality that for every $1 \leq i\leq j$,
if there is some atom~$a \in \at$ such that $\lambda(j) = a$,
then there is also some atom~$b \in \at$ such that $\lambda(i) = b$.
Let $U_i' \ \eqdef \ \setm{ a \in \at }{\lambda(a) = i }$
and let
$$\Pi_i' \ \eqdef \ \setm{ r \in \Pi }{ (\Atoms(r)\setminus \Bodym(r)) \subseteq U_i'  }$$
and let us also define the sets
$U_i \eqdef U_1' \cup \dotsc \cup U_i'$
and
$\Pi_i \eqdef \Pi_1' \cup \dotsc \cup \Pi_i'$.
Then, $\Pi_1', \dotsc , \Pi_n'$ is a partition of $\Pi$
and, for $1 < i \leq n$, we can see that $U_i$ is a splitting set of the program $\Pi_i$
with $B_{U_i}(\Pi_i) = \Pi_{i-1}$ and $T_{U_i}(\Pi_i) = \Pi_i'$.
Furthermore, since $\cS$ satisfies epistemic splitting, it follows that
$\wv_i$ is a \mbox{$\cS$-world} view of $\Pi_i$ iff there is a \mbox{$\cS$-solution} $\tuple{\wv_{i-1},\wv_i'}$ of $\Pi_i$ with respect to $U_i$ such that \mbox{$\wv_i = \wv_{i-1} \sqcup \wv_i'$}.
Note that, by definition of solution,
we get that
$\wv_{i-1}$ is a \mbox{$\cS$-world} view of~$B_{U_i}(\Pi_i)=\Pi_{i-1}$
and
$\wv_{i}'$ is a \mbox{$\cS$-world} view of~$E_{U_i}(\Pi_i,\wv_{i-1})$.
Then,
by induction hypothesis,
we get that
$\wv_{i-1} = \wv_1' \sqcup \dotsc \sqcup \wv_{i-1}'$
and, thus,
we get
$\wv_i = \wv_1' \sqcup \dotsc \sqcup \wv_i'$.
As a result,
it follows
\mbox{$\wv = \wv_n = \wv_1' \sqcup \dotsc \sqcup \wv_n'$}.
Furthermore,
it is easy to see that $\Pi_1 = \Pi_1'$ must be an objective program (possibly empty)
and, since $\cS$ satisfies \mbox{supra-ASP}, we get that it has a unique $\cS$-world view~$\wv_1$ at most.
Moreover, for $1 < i \leq n$, we get that $E_{U_i}(\Pi_i',\wv_{i-1})$ is also an objective program and, thus, 
it has a unique $\cS$-world view $\wv_i'$ at most.
Then, it follows by induction that $\wv$ is unique, if it exists.
\end{proof}

The proof of the theorem just relies on multiple applications of splitting to each layer and the fact that each simplification $E_U(\Pi,\wv_b)$ will be an objective program.
This is very easy to see in the extended example $\program\ref{prg:3}$.
We can split the program using as $U$ all atoms but $\appointment(mike)$ to get a bottom $\program\ref{prg:2}$ and a top $\set{\eqref{ex:college.6}}$.
Program $\program\ref{prg:2}$ can be split in its turn as we saw before, producing the unique world view 
$\cset{\eqref{f:sm1},\eqref{f:sm2}}$.
Then $E_U(\program\ref{prg:3},[ \eqref{f:sm1},\eqref{f:sm2} ])$ contains the single rule 
$$\appointment(mike) \leftarrow \top$$
This is also an objective program whose unique world view is $[\{\appointment(mike)\}]$ and, finally, the combination of these two world views yields again a unique world view 
$$[\ \eqref{f:sm1} \cup \{\appointment(mike)\}\ , \ \eqref{f:sm2} \cup \{\appointment(mike)\} \ ]$$

Supra-ASP and epistemic splitting not only guarantee the existence of (at most) one world view for epistemic stratified programs but, in fact, they also force all semantics to coincide on this class of programs~\cite[Theorem~7]{fandinno19} --
that is, either none has a world view or all of them yield the same, unique world view.
Another consequence of epistemic splitting is that subjective constraints will have a monotonic behaviour.
Note first that, for a subjective constraint~$r$, we can abbreviate $\tuple{\wv,I} \models r$ as $\wv \models r$ since the $I$ component is irrelevant.
Additionally, $\wv \models r$ means that $\Body(r)=\Bodym(r)$ is falsified, since $\Head(r)=\bot$.

\begin{property}[subjective constraint monotonicity]
A semantics satisfies \emph{subjective constraint monotonicity} if, for any epistemic program $\Pi$ and any subjective constraint $r$, $\wv$ is a world view of $\Pi \cup \{r\}$ iff both $\wv$ is a world view of $\Pi$ and $\wv \models r$.\qed
\end{property}

\begin{Theorem}{\label{thm:splitting->constraint.monotonicity}}
Epistemic splitting implies subjective constraint monotonicity. \qed
\end{Theorem}

\begin{proof}
Suppose we use a semantics satisfying epistemic splitting.
For any program $\Pi$ and any epistemic constraint $r$, we can always take the whole set of atoms $U=\Atoms(\Pi \cup \{r\})$ as an epistemic splitting set for $\Pi'=\Pi \cup \{r\}$ and take $B_U(\Pi')=\Pi$ and $T_U(\Pi')=\{r\}$.
For any world view $\wv$ of $B_U(\Pi')$ two things may happen.
A first possibility is $\wv \models r$, and so the body of $r$ has some false subjective literal in $\wv$, so $E_U(\Pi',\wv)$ would be equivalent to $\bot \leftarrow \bot$.
Then, the unique world view for the top would be $\wv_t=[\emptyset ]$ and $\wv \sqcup \wv_t=\wv$.
A second case is $\wv \not\models r$, so all literals in the body are satisfied and $E_U(\Pi',\wv)$ would be equivalent to $\bot \leftarrow \top$ which has no world views.
To sum up, we get exactly those world views $\wv$ of $\Pi$ that satisfy $r$.
\end{proof}

\section{Epistemic splitting in existing semantics}
\label{sec:related}

In this section, we study the property of epistemic splitting for the approaches mentioned in the introduction and start by proving that the G91 semantics does satisfy this property.

\subsection{Epistemic splitting in Gelfond's original semantics}
\label{sec:espliting.g91}

We begin introducing some notation.
Let $U \subseteq \At$ be any set of atoms and $\overline{U}=\At \setminus U$.
We define the projection of a set $\wv$ of propositional interpretations to atoms in $U$ as $\restr{\wv}{U} \eqdef \{I \cap U \mid I \in \wv\}$.
Using these abbreviations, we can observe that: 

\begin{observation}\label{prop:splitting.modal.atom}
Let $\wv$ be a set of propositional interpretations
and
$U\subseteq \At$ be a set of atoms.
Then, for any subjective literal~$L$ with $\Atoms(L)=\{a\}$:
\begin{enumerate}[ label=\roman*)]
\item if $a \in U$, then $\wv \models L$ iff $\restr{\wv}{U} \models L$,
\item if $a \notin U$, then $\wv \models L$ iff $\restr{\wv}{\overline{U}} \models L$.
\end{enumerate}
\end{observation}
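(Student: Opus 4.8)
The plan is to reduce the claim to a single elementary fact about how projection preserves the membership status of the atom $a$, and then to lift that fact through the modal operators. Since $\Atoms(L)=\{a\}$, the inner objective literal $l$ of $L$ must be one of $a$, $\Not a$ or $\Not\Not a$ (the constants $\top,\bot$ are excluded because they contribute no atoms), and the outer modality is $\K$ or $\M$, possibly prefixed by $\Not$. In every case the truth value $\tuple{\wv,I}\models l$ of the inner objective literal depends only on whether $a\in I$, and not on $\wv$; dually, the truth of the subjective literal $L$ depends on $\wv$ but not on $I$, as already noted in the background section.

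First I would isolate the following observation, writing $V$ for $U$ in part (i) and for $\overline{U}$ in part (ii): in both cases the relevant atom satisfies $a\in V$. Hence for every $I\in\wv$ we have $a\in I$ iff $a\in I\cap V$, and therefore $\tuple{\wv,I}\models l$ iff $\tuple{\wv,I\cap V}\models l$ for each of the three admissible forms of $l$. This is the only place where the hypothesis distinguishing $U$ from $\overline{U}$ enters, and it is immediate from the definition of $I\cap V$.

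Next I would lift this to the subjective literal by unfolding the semantics of the modalities. For $L=\K l$ we have $\wv\models\K l$ iff $\tuple{\wv,I}\models l$ for all $I\in\wv$, while $\restr{\wv}{V}\models\K l$ iff $\tuple{\wv,J}\models l$ for all $J\in\restr{\wv}{V}=\setm{I\cap V}{I\in\wv}$; by the isolated observation these two universally quantified statements are equivalent. The case $L=\M l$ is identical with existential in place of universal quantification. The forms $\Not\K l$ and $\Not\M l$ then follow by taking complements, since $\wv\models\Not L$ iff $\wv\not\models L$ and likewise for $\restr{\wv}{V}$.

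I do not expect a genuine obstacle here; the argument is a routine unfolding of definitions. The one point deserving care is that the projection is taken as a \emph{set}, so distinct interpretations $I_1,I_2\in\wv$ with $I_1\cap V=I_2\cap V$ are identified in $\restr{\wv}{V}$. This collapse is harmless precisely because $a\in V$: any two interpretations agreeing on their restriction to $V$ agree on membership of $a$, hence assign the same truth value to $l$, so neither the universal quantification (for $\K$) nor the existential one (for $\M$) is affected. Making this remark explicit is all that is needed to render the lifting step fully rigorous.
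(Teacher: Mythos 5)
Your proof is correct and takes essentially the approach the paper intends: the paper states this as an observation with no explicit proof, treating it as immediate from the definitions, and your argument is exactly the routine unfolding (satisfaction of the inner objective literal depends only on membership of $a$, which is preserved by projection onto any $V$ with $a \in V$, and this lifts through the universal/existential quantification of $\K$/$\M$ and through outer negation). Your explicit remark about the set-collapse in $\restr{\wv}{V}$ being harmless is a sound and worthwhile precision, not a deviation.
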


\begin{lemma}\label{lem:splitting.reduct}
Let $\Pi$ be a program that accepts an epistemic splitting set \mbox{$U \subseteq \text{\rm \At}$} and let
$\wv$ be a set of propositional interpretations.
Let  $\wv_b = \restr{\wv}{U}$ and $\wv_t = \restr{\wv}{\overline{U}}$.
Then, we get
\begin{enumerate}[ label=\roman*), leftmargin=15pt]
\item $B_U(\Pi)^{\wv} = B_U(\Pi)^{\wv_b}$,
\item $T_U(\Pi)^\wv = E_U(\Pi,\wv_b)^{\wv_t}$, and
\item $\Pi^\wv =  B_U(\Pi)^{\wv_b} \cup E_U(\Pi,\wv_b)^{\wv_t}$.
\end{enumerate}
\end{lemma}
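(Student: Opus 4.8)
The plan is to prove the three claims in order, since claim (iii) will follow almost immediately by combining (i) and (ii) once we recall that $\Pi = B_U(\Pi) \cup T_U(\Pi)$ is a disjoint partition and that the subjective reduct distributes over a union of rules. The heart of the matter is claims (i) and (ii), and the main tool throughout will be Observation~\ref{prop:splitting.modal.atom}, which tells us that the truth of a subjective literal $L$ under $\wv$ can be decided from the appropriate projection, depending on whether $\Atoms(L)$ lies in $U$ or in $\overline{U}$.

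\textbf{Claim (i).} Here I would argue that applying the full reduct $(\cdot)^\wv$ and the projected reduct $(\cdot)^{\wv_b}$ to $B_U(\Pi)$ produce literally the same program. Every rule $r \in B_U(\Pi)$ satisfies condition~(i) of Definition~\ref{def:splitting}, so $\Atoms(r) \subseteq U$; in particular every subjective literal $L$ occurring in $r$ has $\Atoms(L) \subseteq U$. The full reduct $(\cdot)^\wv$ replaces each such $L$ by $\top$ if $\wv \models L$ and by $\bot$ otherwise. By part~(i) of Observation~\ref{prop:splitting.modal.atom}, $\wv \models L$ iff $\restr{\wv}{U} = \wv_b \models L$, so each subjective literal is replaced by exactly the same truth constant under both reducts. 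Since the reduct only touches subjective literals and leaves objective literals and heads unchanged, $B_U(\Pi)^\wv = B_U(\Pi)^{\wv_b}$. (A small point to handle: the lemma should be read with subjective literals over the signature $U$; the plain reduct $\Pi^\wv$ means $\Pi^\wv_\At$, which replaces all subjective literals, and this matches because every subjective literal in $B_U(\Pi)$ is over $U$.)

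\textbf{Claim (ii).} This is the step I expect to be the main obstacle, because it chains two reducts against a single one, and the bookkeeping over which literals get replaced at each stage must be done carefully. Recall $E_U(\Pi,\wv_b) \eqdef T_U(\Pi)^{\wv}_U|_{\wv_b}$, i.e.\ it is the subjective reduct of the top taken with respect to $\wv_b$ restricted to signature $U$, so it replaces exactly those subjective literals $L$ in $T_U(\Pi)$ with $\Atoms(L) \subseteq U$, using $\wv_b \models L$. Then $E_U(\Pi,\wv_b)^{\wv_t}$ replaces the remaining subjective literals, those with atoms in $\overline{U}$, using $\wv_t \models L$. I would show this composite agrees rule-by-rule with $T_U(\Pi)^\wv$, which replaces every subjective literal at once using $\wv$. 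For a subjective literal $L$ with $\Atoms(L) \subseteq U$: by Observation~\ref{prop:splitting.modal.atom}(i), $\wv \models L$ iff $\wv_b \models L$, so both routes yield the same constant. For a subjective literal $L$ with $\Atoms(L) \subseteq \overline{U}$ (these are the only two possibilities, since each subjective literal has a single atom): by Observation~\ref{prop:splitting.modal.atom}(ii), $\wv \models L$ iff $\wv_t \models L$, again matching. Here one must note that rules in $T_U(\Pi)$ satisfy condition~(ii), so their objective body literals and heads avoid $U$ entirely, guaranteeing the two-stage reduct never needs to touch an objective atom in a conflicting way and that the literals partition cleanly into the ``$U$'' and ``$\overline{U}$'' cases. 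I would emphasize that the subjective reduct over $U$ leaves the $\overline{U}$-literals syntactically intact for the second reduct to act on, so the composition is well-defined and order-insensitive.

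\textbf{Claim (iii).} Finally, I would combine the pieces. Since $\langle B_U(\Pi), T_U(\Pi)\rangle$ is a splitting, $\Pi = B_U(\Pi) \cup T_U(\Pi)$ and the reduct of a program is the union of the reducts of its rules, so $\Pi^\wv = B_U(\Pi)^\wv \cup T_U(\Pi)^\wv$. Substituting claim~(i) into the first summand and claim~(ii) into the second yields $\Pi^\wv = B_U(\Pi)^{\wv_b} \cup E_U(\Pi,\wv_b)^{\wv_t}$, as required. This last step is purely formal and carries no difficulty.
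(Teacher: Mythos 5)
Your proof is correct and follows essentially the same route as the paper's: claim (i) via Observation~\ref{prop:splitting.modal.atom}, claim (ii) by showing that the one-step reduct $T_U(\Pi)^\wv$ coincides with the two-stage reduct $(T_U(\Pi)^{\wv_b}_U)^{\wv_t}$ (the paper states this composition identity for an arbitrary program $\Gamma$ and calls it easy to check, whereas you verify it literal-by-literal, which is a fine elaboration of the same idea), and claim (iii) by distributing the reduct over the union $\Pi = B_U(\Pi) \cup T_U(\Pi)$. The only blemishes are cosmetic: your notation $T_U(\Pi)^{\wv}_U|_{\wv_b}$ for $E_U(\Pi,\wv_b)$ should read $T_U(\Pi)^{\wv_b}_U$, and your remark about condition~(ii) protecting objective atoms is unnecessary, since the subjective reduct never touches objective literals in any case.
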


\begin{proof}
First, since every rule \mbox{$r \in B_U(\Pi)$}
satisfies \mbox{$\Atoms(\Bodym(r)) \subseteq U$},
from Observation~\ref{prop:splitting.modal.atom},
it follows that
$B_U(\Pi)^\wv = B_U(\Pi)^{\wv_b}$.
Furthemore,
for any program $\Gamma$, it is easy to check that
$\Gamma^\wv = (\Gamma^{\wv_b}_U)^{\wv_t}$,
that is, applying the reduct with respect to $\wv$ is the same as applying it with respect to its projection in $U$ and, afterwards, to the remaining part.
Thus, we get
$T_U(\Pi)^\wv = (T_U(\Pi)^{\wv_b}_U)^{\wv_t} = E_U(\Pi,\wv_b)^{\wv_t}$.
Finally, we have that
$\Pi^\wv = (B_U(\Pi) \cup T_U(\Pi))^\wv = B_U(\Pi)^\wv \cup T_U(\Pi)^\wv$
and, thus, the result holds.
\end{proof}

We are now ready to prove the main theorem.

\begin{mtheorem}
Semantics G91 satisfies epistemic splitting.\qed
\end{mtheorem}

\begin{proof}
Let $\wv$ be some set of propositional interpretations and let
$\wv_b = \restr{\wv}{U}$ and $\wv_t = \restr{\wv}{\overline{U}}$.
By definition, $\wv$ is a world view of $\Pi$
if and only if $\wv = \SM[\Pi^\wv]$.
Furthermore, since $U$ is a modal splitting set of $\Pi$,
it is easy to check that $U$ is also a regular splitting set of the regular program~$\Pi^\wv$.
Hence, from Corollary~\ref{cor:nm.splitting},
we get that $\wv$ is a world view of $\Pi$ iff
$$
\wv \ \ = \ \ \SM[\Pi^\wv] \ \ = \ \ \setbm{ I_b \cup I_t }{ I_b \in \SM[\hat{b}_U(\Pi^\wv)] \text{ and } I_t \in \SM[\hat{e}_U(\Pi^\wv\!,I_b)] }$$
for some arbitrary splitting $\tuple{\hat{b}_U(\Pi^\wv),\hat{t}_U(\Pi^\wv)}$.
Note that all rules belonging to $B_U(\Pi)$ have all atoms from~$U$.
Hence,
we take $\hat{b}_U(\Pi^\wv) \eqdef B_U(\Pi)^\wv = B_U(\Pi)^{\wv_b}$ (Proposition~\ref{lem:splitting.reduct}).
Similarly, we also take
$\hat{t}_U(\Pi^\wv) \eqdef T_U(\Pi)^\wv = E_U(\Pi,\wv_b)^{\wv_t}$.
Then, we get
$$\hat{e}_U(\Pi^\wv\!,I_b) \ = \  \hat{e}_U(\hat{t}_U(\Pi^\wv),I_b) \ = \ \hat{e}_U(E_U(\Pi,\wv_b)^{\wv_t}\!,I_b)$$
Notice also that no atom occurring in $E_U(\Pi,\wv_b)^{\wv_t}$ belongs to $U$,
which implies that
$$\hat{e}_U(E_U(\Pi,\wv_b)^{\wv_t}\!,I_b) = E_U(\Pi,\wv_b)^{\wv_t}$$
Replacing above, we have that
$\wv$ is a world view of $\Pi$ iff $\wv$ is equal to
$$\{ I_b \cup I_t \mid I_b \in \SM[B_U(\Pi)^{\wv_b}], I_t \in \SM[E_U(\Pi,\wv_b)^{\wv_t}] \}$$
iff
$$\wv = \{ I_b \cup I_t \mid I_b \in \wv_b' \text{ and } I_t \in \wv_t' \}$$
with $\wv_b' = \SM[B_U(\Pi)^{\wv_b}]$
and
$\wv_t '= \SM[E_U(\Pi,\wv_b)^{\wv_t}]$
\\iff
$\wv = \wv_b' \sqcup \wv_t'$.
Hence, it only remains to be shown that both
$\wv_b = \wv_b'$
and
$\wv_t = \wv_t'$ hold.
Note that $I \in \wv_b = \restr{\wv}{U}$ iff $I = I' \cap U$ for some $I' \in \wv$
iff
$I = (I_b \cup I_t) \cap U$ for some $I_b \in \wv_b'$ and $I_t \in \wv_t'$
iff
$I = (I_b \cap U) \cup (I_t \cap U)$ for some $I_b \in \wv_b'$ and $I_t \in \wv_t'$
iff
$I = I_b$ for some $I_b \in \wv_b'$.
The fact $\wv_t = \wv_t'$ follows in an analogous way.
\end{proof}

Finally, recall that \cite{wanzha05a} and~\cite{truszczynski11} extended the G91 semantics to cover arbitrary formulas, but on the syntax of epistemic logic programs these three semantics agree. Consequently, this proof applies directly to these semantics.

\begin{corollary}
The semantics by~\citeN{wanzha05a} and~\citeN{truszczynski11} for epistemic logic programs satisfy epistemic splitting.\qed
\end{corollary}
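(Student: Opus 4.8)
The plan is to reduce the corollary to the preceding main theorem by a purely \emph{semantics-transfer} argument, exploiting the fact that epistemic splitting is a property of the world-view assignment alone. First I would emphasize that every ingredient appearing in the statement of epistemic splitting --- the epistemic splitting set $U$, the split $\tuple{B_U(\Pi),T_U(\Pi)}$, the simplified top $E_U(\Pi,\wv_b)=T_U(\Pi)^{\wv_b}_U$, and the join $\wv_b \sqcup \wv_t$ --- is defined by a syntactic construction that makes no reference to any particular semantics. The only place where a semantics $\cS$ enters is through the notion of \emph{$\cS$-world view}. Consequently, if two semantics assign exactly the same set of world views to every program in the relevant syntactic class, then one satisfies epistemic splitting precisely when the other does.

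Second, I would invoke the stated fact that, restricted to the syntax of epistemic logic programs considered here (rules of the form~\eqref{eq:rule} whose body literals are objective or subjective), the semantics of~\citeN{wanzha05a} and~\citeN{truszczynski11} coincide with G91: they select exactly the same world views. Combining this with the main theorem, which establishes epistemic splitting for G91, would immediately give the result --- provided the agreement among the three semantics applies not only to the input program $\Pi$ but to \emph{all} programs that occur when checking the splitting condition.

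The main obstacle, and the only point that needs genuine care, is therefore verifying that the syntactic fragment on which the three semantics agree is closed under the operations used in epistemic splitting. Concretely, I would check that $B_U(\Pi)$ and $T_U(\Pi)$, being subsets of $\Pi$, are again epistemic logic programs, and --- more importantly --- that the subjective reduct $E_U(\Pi,\wv_b)$ remains within the fragment: replacing each subjective literal over $U$ by $\top$ or $\bot$ turns an epistemic rule into another rule of the form~\eqref{eq:rule} (possibly objective), so it is still a legal epistemic logic program. Once this closure is established, the three semantics assign identical world views to $\Pi$, to $B_U(\Pi)$, and to $E_U(\Pi,\wv_b)$ alike; hence the chain of equivalences proving epistemic splitting for G91 transfers verbatim to the other two semantics. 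Note that no appeal to supra-ASP or supra-S5 is needed: the argument is a direct transfer along the agreement of the world-view assignments.
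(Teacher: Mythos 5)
Your proposal is correct and takes essentially the same route as the paper: the paper's own proof is precisely the observation that the semantics of~\citeN{wanzha05a} and~\citeN{truszczynski11} agree with G91 on the syntax of epistemic logic programs, so the main theorem transfers directly. Your explicit check that the fragment is closed under the splitting constructions ($B_U(\Pi)$, $T_U(\Pi)$, and the reduct $E_U(\Pi,\wv_b)$, whose $\top$/$\bot$ replacements are legal objective literals) is a detail the paper leaves implicit, but it is the same agreement-based transfer argument.
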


\subsection{Epistemic splitting in the semantics by~\protect\citeN{gelfond11a}}
\label{sec:espliting.g11}

As mentioned in the introduction, \citeN{gelfond11a} revisited the semantics of epistemic logic programs in an attempt to get rid of unintended world views.
This approach, which we will denote here as G11, consisted in a modification of the reduct so that positive subjective literals were not completely removed:

\begin{definition}[G11-world views]
Given a logic program~$\Pi$, its G11-reduct with respect to a non-empty set of interpretations~$\wv$ 
is a program obtained by:
\begin{enumerate}
\item replacing by $\bot$ every expression $\K l$ such that $\wv \not\models \K l$,
\item removing all other occurrences of subjective literals of the form $\neg\K l$, and
\item replacing all other occurrences of subjective literals of the form $\K l$ by $l$.
\end{enumerate}
A non-empty set of interpretations~$\wv$ is a G11-world view of $\Pi$ iff $\wv$ is the set of all stable models of the G11-reduct of $\Pi$ with respect to~$\wv$.\qed
\end{definition}

The following counterexample shows that this semantics does not satisfy epistemic splitting:

\begin{counterexample}\label{ex:g11.no-splitting}
Let $\newprogram\label{prg:g11a}$ be the program containing the following two rules:
\begin{gather*}
a \vee b 
\hspace{2cm}
c \leftarrow  \K a   
\end{gather*}
Then, all semantics mentioned in this paper agree that this program has a unique world view $\wv_{\ref{prg:g11a}} = \cset{ \set{a}, \set{b} }$.
In the particular case of the G11 semantics,
we can check this fact by applying the G11-reduct. Observe that the resulting program only contains the disjunction $a \vee b$ and, thus, $\wv_{\ref{prg:g11a}}$ is a world view of program~\program\ref{prg:g11a}.
For semantics that satisfy epistemic splitting, like G91, we can see that $U_{\ref{prg:g11a}} = \set{a,b}$ is a splitting set of~\program\ref{prg:g11a}
and, thus, we can compute the unique world view $\wv_{\ref{prg:g11a}}$ of the bottom part, $a \vee b$, and use it to simply the top part obtaining this same world view.
To show that G11 does not statify epistemic splitting, let $\newprogram\label{prg:g11b}$ be the program
\begin{gather*}
a \vee b 
\hspace{2cm}
c \leftarrow  \K a  
\hspace{2cm}
\leftarrow \Not c
\end{gather*}
obtained by  adding the last constraint
to our program~\program\ref{prg:g11a}.
Note that $U_{\ref{prg:g11a}}$ is also a splitting set of this program dividing the program
into a bottom part containing the disjunction $a \vee b$ and a top part containing the other two rules.
Then, the result of simplifying the top part with respect to the bottom world view~$E_U(\program\ref{prg:g11b},\wv_{\ref{prg:g11a}})$ is
\begin{gather*}
c \leftarrow  \bot
\hspace{2cm}
\leftarrow \Not c
\end{gather*}
which has has no stable model and, thus, no world view.
As a result, we cannot generate any world view by combining with $\wv_{\ref{prg:g11a}}$ and we obtain that $\program\ref{prg:g11b}$ has no world views at all.
Since this happens for any semantics satisfying epistemic splitting and supra-ASP, it means that there is no \mbox{G91-world} view either.
On the other hand, we can check that this program has the G11-world view~$\wv_{\ref{prg:g11b}} = \cset{ \set{a,c} }$.
To see this fact, note that the G11 reduct of~$\program\ref{prg:g11b}$ with respect to~$\wv_{\ref{prg:g11b}}$
is
\begin{gather*}
a \vee b 
\hspace{2cm}
c \leftarrow  a
\hspace{2cm}
\leftarrow \Not c
\end{gather*}
which has a unique stable model~$\set{a,c}$.
To finish our example note that~$\wv_{\ref{prg:g11b}}$
was not a world view of the first program~$\program\ref{prg:g11a}$ since its G11-reduct is the program
\begin{gather*}
a \vee b 
\hspace{2cm}
c \leftarrow  a
\end{gather*}
which has stable models~$\set{a,c}$ and $\set{b}$.
This shows that, in the G11 semantics, adding a constraint that depends on a subjective literal may lead to justify that subjective literal.\qed
\end{counterexample}

\subsection{Epistemic splitting in the semantics by~\protect\citeN{kawabagezh15} and~\protect\citeN{sheeit17a}}
\label{sec:espliting.k15}

A new revision of the semantics of epistemic logic programs was introduced by~\mbox{\citeN{kawabagezh15}} (denoted here as K15) who proposed a new variation of the reduct as follows\footnote{We use here the alternative modal reduct introduced in the Appendix~C of \cite{kawabagezh15} because of its simplicity.}:

\begin{definition}[K15-world views]
Given a logic program~$\Pi$, its K15-reduct with respect to a non-empty set of interpretations~$\wv$ 
is a program obtained by:
\begin{enumerate}
\item replacing by $\bot$ every expression $\K l$ such that $\wv \not\models \K l$,
\item replacing all other occurrences of the expression $\K l$ by $l$.
\end{enumerate}
A non-empty set of interpretations~$\wv$ is a K15-world view of $\Pi$ iff $\wv$ is the set of all stable models of the K15-reduct of $\Pi$ with respect to~$\wv$.\qed
\end{definition}

To show that this semantics does not satisfy epistemic splitting we can use the same program of Counterexample~\ref{ex:g11.no-splitting}
because the K15-reduct of~$\program\ref{prg:g11b}$ with respect to~$\wv_{\ref{prg:g11b}} = \cset{ \set{a,c} }$
is the same as its G11-reduct and, thus, we obtain the same world view violating epistemic splitting.
Note that for programs where all subjective literals are positive, the G11 and K15-reducts always coincide.

A further revision, denoted S17, was later introduced by~\citeN{sheeit17a}.
Although the original formulation of S17 was done in terms of a so-called \emph{epistemic negation} operator, $\mathbf{not}\ l$, we use here an alternative characterisation obtained by \citeN{solekale17a}.
This characterisation amounts to translating $\mathbf{not}\ l$ as $\Not \K l$ and selecting a class of minimal K15-world views.  

\begin{definition}[S17-world views]
Let~$\Pi$ be a logic program~$\Pi$ and $E_\Pi$ be the set of epistemic literals that contains $\Not \K l$ for every epistemic literal of the form $\K l$ that occurs in~$\Pi$.
Let $\Phi_\wv \eqdef \setm{ L \in E_\Pi }{\wv \models L}$ be the subset of $E_\Pi$ satisfied by non-empty set of interpretations~$\wv$.
Then, a non-empty set of interpretations~$\wv$ is a S17-world view iff is a K15-world view and there is no other K15-world view $\wv'$ such that $\Phi_{\wv'} \supset \Phi_{\wv}$.\qed
\end{definition}
Notice that S17 coincides with K15 when the latter yields a unique world view, since the minimisation in that case has no effect.
Therefore, Counterexample~\ref{ex:g11.no-splitting} also applies to S17 because it has a unique K15-world view.

It is also worth to mention that~\citeN{leckah18b} have observed that the K15 and S17 semantics do not satisfy subjective constraint monotonicity and, given Theorem~\ref{thm:splitting->constraint.monotonicity}, this can also be used as another counterexample for epistemic splitting.

\begin{counterexample}[From~\protected\citeNP{leckah18b}]\label{ex:k15.no-splitting}
Let $\newprogram\label{prg:k18}$ be the program containing the following two rules:
\begin{gather*}
a \vee b 
\hspace{2cm}
\leftarrow  \Not \K a
\end{gather*}
which has a unique K15 and S17-world view $\cset{ \set{a } }$.
Note however that the program containing just $a \vee b$ is objective and has two stable models~$\set{a}$ and~$\set{b}$.
Thus, it has the unique world view~$\cset{ \set{a }, \set{b} }$,
which does not satisfy the subjective constraint~$\leftarrow \Not  \K a$.
\qed
\end{counterexample}

In other words, in the K15 and S17 semantics adding constraints intended to remove some world views may lead to new world views.
Note that there may be semantics where epistemic splitting fails but, still, subjective constraint monotonicity is satisfied.
In fact, the latter was proved for G11 in~\cite[Proposition~8]{fandinno19}.

\subsection{Epistemic splitting in the semantics by~\protect\citeN{faheir15a}}
\label{sec:l15}

\citeN{faheir15a} introduced a different semantics for epistemic logic programs, denoted here as F15, which is based on a combination of Equilibrium Logic~\cite{pearce96a} with the modal logic~S5.
They semantics covers arbitrary S5-formulas, but we restrict our presentation here to the syntax of epistemic logic programs.

\begin{definition}
An EHT-interpretation is a pair $\tuple{\wv,h}$ where $\wv$ is a non-empty set of interpretations and $h: \wv \longrightarrow 2^\at$ is a function mapping each interpretation~$T$ to some subset of atoms such that $h(T) \subseteq T$.\qed
\end{definition}

Intuitively,
the set $\wv$ contains the valuation in the ``there'' world while the
set~$\setm{h(I)}{ I \in \wv }$ corresponds to the valuation in the ``here'' world.

Satisfaction of formulas with respect to EHT-interpretations is defined in a similar way as with respect to modal interpretations (see Section~\ref{sec:back}).
An EHT-interpretation $\tuple{\wv,h}$ \emph{satisfies} a literal at point $I \in \wv$,
if one of the following conditions hold
\begin{enumerate}
\item $\tuple{\wv,h,I} \models \top$,
\item $\tuple{\wv,h,I} \not\models \bot$,
\item $\tuple{\wv,h,I} \models a$ if $a \in h(I)$, for any atom $a \in \At$,
\item $\tuple{\wv,h,I} \models \K l$ if $\tuple{\wv,h,I'} \models l$ for all $I' \in \wv$,
\item $\tuple{\wv,h,I} \models \M l$ if $\tuple{\wv,h,I'} \models l$ for some $I' \in \wv$, and
\item $\tuple{\wv,h,I} \models \Not L$ if $\tuple{\wv,id,I} \not\models L$.
\end{enumerate}
where $id : \wv \longrightarrow 2^\at$ is the identity function mapping, that is, $id(T) = T$ for every $T \in \wv$.
Then, $\tuple{\wv,h,I}$ \emph{satisfies} a rule~$r$ at point $I \in \wv$, also written $\tuple{\wv,h,I} \models r$, 
iff $\tuple{\wv,h,I} \models L$ for every literal $L \in \Body(r)$
implies $\tuple{\wv,h,I} \models a$ for some atom $a \in \Head(r)$.
We say that $\tuple{\wv,h}$ is an \emph{EHT-model} of a program~$\Pi$, written $\tuple{\wv,h}\models \Pi$, if it satisfies all its rules at all points $I \in \wv$.

An EHT-interpretation $\tuple{\wv,h}$ is said to be \emph{total on} a set $\wx \subseteq \wv$ iff $h(I) = I$
for every $I \in \wx$.
It is said to be \emph{total} iff it is total on $\wv$.
Then, equilibrium models are defined as follows:

\begin{definition}
A total EHT-model $\tuple{\wv,id}\models \Pi$ of some program~$\Pi$ is an
\emph{equilibrium EHT-model} iff 
there is no other EHT-model $\tuple{\wv,h}\models \Pi$ such that $h(I) \subset I$ for some $I \in \wv$.\qed
\end{definition}

Then, the F15-world views are defined as a selection of equilibrium EHT-models.
For that, we need to introduce the following definitions:

\begin{definition}
Given a logic program~$\Pi$, a non-empty set of interpretations and a subset $\wx \subseteq \wv$,
we write $\wv,\wx \models^* \Pi$ iff the following two conditions are satisfied:
\begin{enumerate}
\item $\tuple{\wv,id,I} \models \Pi$ for all $I \in \wx$, and
\item if $\tuple{\wv,h,I} \models \Pi$ for some $I \in \wv$ such that $\tuple{\wv,h}$ is total on $\wv\setminus\wx$,
	then $\tuple{\wv,h}$ is total.\qed
\end{enumerate}
Then, for any two pairs of non-empty set of interpretations $\wv$ and $\wv'$
we write
$\wv \leq_\Pi \wv'$ iff
\begin{gather*}
\wv \cup \set{ I}, \wv \models^* \Pi \quad\text{implies}\quad \wv' \cup \set{I},\wv' \models^* \Pi
\end{gather*}
for every $I$ such that $I$ belongs to some equilibrium EHT-model of~$\Pi$.\qed
\end{definition}

\begin{definition}
Given a logic program~$\Pi$, equilibrium EHT-model~$\wv$ is called an \emph{F15-world view} iff there is no other equilibrium EHT-model~$\wv'$ such that $\wv \subset \wv'$ or\footnote{As usual $\wv <_\Pi \wv'$ stands for $\wv \leq_\Pi \wv'$
and $\wv \not\leq_\Pi \wv'$.} $\wv <_\Pi \wv'$.\qed
\end{definition}

The following counterexample shows that this semantics does not satisfy subjective constraint monotonicity and, thus, it does not satisfy epistemic splitting either:

\begin{counterexample}\label{ex:l15.no-splitting}
Let us recall the program~$\program\ref{prg:g11a}$ from Counterexample~\ref{ex:k15.no-splitting}:
\begin{gather*}
a \vee b 
\hspace{2cm}
\leftarrow  \Not \K a
\end{gather*}
which has a unique equilibrium EHT-model~$\cset{ \set{a } }$, which therefore is trivially its unique~\mbox{F15-world} view.
Then, to show that the F15 semantics does not satisfy subjective constraint monotonicity we just need to show that 
$\cset{ \set{a } }$ is not an~\mbox{F15-world} view of the program consisting only of $a \vee b$.
Note that $a \vee b$ has three equilibrium EHT-models: $\cset{ \set{a } }$, $\cset{ \set{b } }$
and
$\cset{ \set{a }, \set{b} }$.
It is easy to see that
$\cset{ \set{a }, \set{b} }$
is the unique $\subset$-maximal one and, thus, the unique \mbox{F15-world} view.\qed
\end{counterexample}

In other words, as happened in the K15 and S17 semantics,
adding constraints intended to remove some world views may also lead to new world views in the F15 semantics.

\subsection{Epistemic splitting in the semantics by~\protect\citeN{cafafa19b}}
\label{sec:espliting.faeel}

The semantics of \emph{Founded Autoepistemic Equilibrium Logic} (we denote here as C19) was introduced in~\cite{cafafa19b} as a 
combination of Equilibrium Logic with Moore's Autoepistemic Logic.
As happened with F15~\cite{faheir15a}, C19 is also applicable to the full syntax of modal logic S5.
However, when we restrict ourselves to the syntax of epistemic logic programs, C19 can be alternatively defined in terms of G91 by imposing an additional condition called \emph{foundedness}.
For simplicity, we focus here in this alternative definition.

Let us start by defining $\Bodyrp(r)$ and $\Bodymp(r)$ as the set of all positive objective literals in the body and
the set of all atoms occurring in positive subjective literals, respectively. 
Then, we can define an \emph{unfounded set} as follows:

\begin{definition}[Unfounded set]\label{def:unfoundedset}
Let $\Pi$ be a program and $\wv$ a belief view.
An \emph{unfounded set} $\us$ with respect to $\Pi$ and $\wv$ is a non-empty set of pairs where, for each $\tuple{X,I} \in \us$, we have that $X$ and $I$ are sets of atoms and there is no rule $r \in \Pi$ with $\Head(r) \cap X \neq \emptyset$ satisfying:
\begin{enumerate}[itemsep=3pt,topsep=2pt]
\item $\kdint{I}{\wv} \models \Body(r)$
	\label{item:1:def:unfounded}
\item $\Bodyrp(r) \cap X = \emptyset$
	\label{item:2:def:unfounded}
\item $(\Head(r) \setminus X) \cap I = \emptyset$
	\label{item:3:def:unfounded}
\item $\Bodymp(r) \cap Y = \emptyset$ with $Y = \bigcup \setm{ X' }{ \tuple{X',I'} \in \us }$.\qed
	\label{item:4:def:unfounded}
\end{enumerate} 
\end{definition}
The definition works in a similar way to standard unfounded sets~\cite[Definition~3.1]{lerusc97a}.
In fact, the latter corresponds to the first three conditions above, except that we use $\kdint{I}{\wv}$ to check $\Body(r)$, as it may contain now subjective literals.
Intuitively, each $I$ represents some potential belief set (or stable model) and $X$ is some  set of atoms without a ``justifying'' rule, that is, there is no $r \in \Pi$ allowing a positive derivation of atoms in $X$.
A rule like that should have a true $\Body(r)$ (condition~\ref{item:1:def:unfounded}) but not because of positive literals in $X$ (condition~\ref{item:2:def:unfounded}) and is not used to derive other head atoms outside $X$ (condition~\ref{item:3:def:unfounded}).
The novelty in the definition \mbox{by~\citeN{cafafa19b}} is the addition of condition~\ref{item:4:def:unfounded}: to consider $r$ a justifying rule, we additionally require not using any positive literal $\bL a$ in the body such that atom $a$ also belongs to any of the unfounded components $X'$ in $\us$.

\begin{definition}[Founded world view]\label{def:unfounded}
Let $\Pi$ be a program and $\wv$ be a belief view.
We say that $\wv$ is \emph{unfounded} if there is some 
unfounded-set~$\us$ s.t.,
for every $\tuple{X,I} \in \us$,
we have $I \in \wv$ and $X \cap I \neq \emptyset$.
$\wv$ is called \emph{founded} otherwise.\qed
\end{definition}

Now, as said before, we can define C19 in terms of G91 using the following result from~\cite{cafafa19b}: 

\begin{theorem}[Main Theorem in~\citeNP{cafafa19b}]
Given any epistemic logic program~$\Pi$, its C19-world views are its founded G91-world views.\qed
\end{theorem}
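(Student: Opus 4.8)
The plan is to prove the two inclusions of the equality separately, exploiting the fact that C19 is obtained by filtering the G91-world views through the foundedness condition encoded in Definition~\ref{def:unfoundedset}. Since the primary definition of C19 is the autoepistemic equilibrium construction over the modal logic S5 (Equilibrium Logic composed with Moore's autoepistemic logic), the first task is to unfold that construction on the fragment of epistemic logic programs and isolate its two ingredients: an \emph{equilibrium} (minimal here-and-there) requirement, which on the reduced objective program coincides with stable-model minimality, and an \emph{autoepistemic groundedness} requirement on the knowledge component that rules out self-supporting beliefs.

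First I would show that every C19-world view is a G91-world view. Here the equilibrium part of the definition forces each interpretation in $\wv$ to be a $\subseteq$-minimal model of the classical reduct obtained after evaluating the subjective literals against $\wv$; this is exactly the subjective reduct $\Pi^\wv$, so the equilibrium condition yields $\wv \subseteq \SM[\Pi^\wv]$, while the totality and maximality requirements on the autoepistemic equilibrium models give the converse inclusion, delivering the G91 fixpoint $\wv = \SM[\Pi^\wv]$. I expect this direction to be essentially a bookkeeping translation between the here/there presentation and the reduct presentation, with no genuine obstacle.

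The heart of the argument is the second step: among G91-world views, the C19 ones are precisely the founded ones. I would prove both directions of the contrapositive, establishing that a G91-world view $\wv$ is rejected by the autoepistemic groundedness of C19 if and only if $\wv$ is unfounded in the sense of Definition~\ref{def:unfounded}, i.e.\ there is an unfounded set $\us$ with $I \in \wv$ and $X \cap I \neq \emptyset$ for every $\tuple{X,I} \in \us$. The forward direction builds, from a witness that the beliefs in $\wv$ lack autoepistemic support (a knowledge-minimal defeating model, or a self-supporting loop), the pairs $\tuple{X,I}$ whose components $X$ collect the unjustified atoms: conditions~\ref{item:1:def:unfounded}--\ref{item:3:def:unfounded} are the epistemic liftings of the classical unfounded-set conditions of~\cite{lerusc97a} (now using $\kdint{I}{\wv}$ to evaluate $\Body(r)$), whereas the novel condition~\ref{item:4:def:unfounded} on $\Bodymp(r)$ captures that a positive subjective literal $\K a$ cannot justify an atom when $a$ itself lies in the unfounded part $Y$. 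The converse direction turns such an unfounded set back into a defeating autoepistemic model, contradicting foundedness.

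The main obstacle I anticipate is precisely this equivalence between the \emph{semantic} failure of autoepistemic groundedness and the \emph{combinatorial} existence of an unfounded set satisfying conditions~\ref{item:1:def:unfounded}--\ref{item:4:def:unfounded} --- in particular, getting condition~\ref{item:4:def:unfounded} exactly right, since it is the only genuinely new ingredient over the non-epistemic unfounded-set theorem and it is what blocks the self-supportedness that motivated C19. Showing that this syntactic condition mirrors the minimisation of knowledge in the autoepistemic equilibrium models neither too strictly nor too loosely is where the real work lies; once it is in place, the remainder reduces to adapting the classical ``stable models are exactly the models without unfounded sets'' argument to the modal setting.
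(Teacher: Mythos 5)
First, a structural remark: the paper does not prove this theorem at all --- it imports it from~\cite{cafafa19b}, where C19-world views are \emph{defined} as founded autoepistemic equilibrium models, and where ``founded'' already means the unfounded-set condition of Definition~\ref{def:unfoundedset}. Measured against that structure, your plan puts the mathematical weight in the wrong place, and its central step rests on a false premise. You attribute to the autoepistemic ingredient of C19 a ``groundedness requirement on the knowledge component that rules out self-supporting beliefs,'' and you make the heart of the proof an equivalence between failure of that requirement and existence of an unfounded set. Moore's autoepistemic logic provides no such requirement: it notoriously admits self-supported (ungrounded) expansions, and this defect survives the combination with Equilibrium Logic --- that is precisely why~\cite{cafafa19b} adds foundedness as a separate, explicit condition. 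The program $\{\, a \leftarrow \K a \,\}$ is the standard witness: both $[\emptyset]$ and $[\{a\}]$ are autoepistemic equilibrium models (and both are G91-world views, as the paper itself recalls in its conclusions), yet only $[\emptyset]$ is founded. Hence the equivalence you intend to prove --- ``$\wv$ is rejected by the autoepistemic groundedness of C19 iff $\wv$ is unfounded'' --- is false if ``groundedness'' means the AEL-plus-equilibrium fixpoint (that fixpoint accepts $[\{a\}]$, which is unfounded), and it is vacuous if it means C19's foundedness condition, since that is literally the same unfounded-set condition appearing on both sides of the theorem. Were your claimed equivalence true, condition~\ref{item:4:def:unfounded} of Definition~\ref{def:unfoundedset} and the whole foundedness filter would be redundant, which contradicts the motivation of C19.

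The actual content of the theorem is the part you wave off as ``bookkeeping with no genuine obstacle'': proving that the AEL-style fixpoint of~\cite{cafafa19b}, computed with equilibrium (here-and-there) models under a fixed belief context $\wv$, coincides with the G91 fixpoint $\wv = \SM[\Pi^\wv]$ on the rule syntax of epistemic programs. This requires showing that, once subjective literals (which occur only in bodies) are evaluated against $\wv$, the equilibrium models relative to that context are exactly the stable models of the subjective reduct $\Pi^\wv$, and that the expansion-style totality condition reproduces both inclusions of the fixpoint equation. Once that correspondence is established, foundedness transfers verbatim --- it is the same combinatorial condition applied to the same objects --- and the theorem follows. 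So the proof you should write is: (i) AEEL models $=$ G91-world views, done carefully; (ii) intersect both sides with the founded ones, which is immediate; your step~2, as designed, has nothing to prove and nothing true to prove.
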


Finally, this relation to G91 was recently used in~\cite{fandinno19} to prove that:

\begin{theorem}[Main Theorem in~\citeNP{fandinno19}]
The C19 semantics for epistemic logic programs satisfies the epistemic splitting property.\qed
\end{theorem}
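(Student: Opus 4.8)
The plan is to combine the two results already at hand---that G91 satisfies epistemic splitting, and that the C19-world views of a program are exactly its founded G91-world views---and to reduce everything to a single new ingredient, a \emph{foundedness splitting lemma}. Fix an epistemic splitting set $U$ of $\Pi$ and, for any candidate $\wv$, write $\wv_b=\restr{\wv}{U}$ and $\wv_t=\restr{\wv}{\overline U}$. By the G91 theorem, $\wv$ is a G91-world view of $\Pi$ iff $\tuple{\wv_b,\wv_t}$ is a G91-solution, i.e.\ $\wv=\wv_b\sqcup\wv_t$ with $\wv_b$ a G91-world view of $B_U(\Pi)$ and $\wv_t$ a G91-world view of $E_U(\Pi,\wv_b)$. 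Hence it is enough to prove: $\wv$ is founded w.r.t.\ $\Pi$ iff $\wv_b$ is founded w.r.t.\ $B_U(\Pi)$ and $\wv_t$ is founded w.r.t.\ $E_U(\Pi,\wv_b)$. Granting this, a C19-world view of $\Pi$ is a founded G91-world view, which by the lemma splits into a founded G91-world view of the bottom and one of the reduced top---precisely a C19-solution---so epistemic splitting for C19 follows at once.

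I would prove the lemma by transporting unfounded sets (Definition~\ref{def:unfoundedset}) across the split, using two structural facts: every rule of $B_U(\Pi)$ mentions only atoms of $U$, whereas every top rule $r$ has $(\Bodyr(r)\cup\Head(r))\cap U=\emptyset$, so atoms of $U$ enter the top only through subjective literals, which the reduct $E_U(\Pi,\wv_b)=T_U(\Pi)^{\wv_b}_U$ turns into truth constants; and, by Observation~\ref{prop:splitting.modal.atom}, a subjective literal over $U$ is evaluated identically by $\wv$ and $\wv_b$, one over $\overline U$ identically by $\wv$ and $\wv_t$. The (easy) ``if'' direction is by lifting. Given an unfounded set $\us_t$ witnessing $\wv_t$ unfounded, the set $\setm{\tuple{X,I}}{\tuple{X,I\cap\overline U}\in\us_t,\ I\in\wv}$ witnesses $\wv$ unfounded: the only rules whose head meets $X\subseteq\overline U$ are top rules, and for such a rule $r$ and its reduct $r'$ one checks conditions~\ref{item:1:def:unfounded}--\ref{item:4:def:unfounded} agree, using $\Head(r')=\Head(r)$, $\Bodyrp(r')=\Bodyrp(r)$ and $\Bodymp(r')=\Bodymp(r)\cap\overline U$ (the erased $U$-literals are matched by the constants in $r'$). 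The case of an unfounded set of the bottom lifts even more directly, since bottom rules never see $\overline U$.

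The ``only if'' direction is the main obstacle. Starting from a witnessing unfounded set $\us$ for $\wv$, the natural move is to project each pair $\tuple{X,I}$ onto $U$ and onto $\overline U$. The $U$-projection $\us_b$ is always a valid unfounded set for $B_U(\Pi)$ because bottom rules only involve $U$; but two things can go wrong. First, condition~\ref{item:4:def:unfounded} for a top rule $r$ is governed in $\Pi$ by $\Bodymp(r)\cap Y$ with $Y=\bigcup\setm{X'}{\tuple{X',I'}\in\us}$, yet in the reduct only by $\Bodymp(r')\cap(Y\cap\overline U)$: a positive literal $\K a$ with $a\in U\cap Y$ may block $r$ in $\Pi$ while its reduct $r'$ is unblocked, so the $\overline U$-projection need not be unfounded. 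Second, the witness requirement $X\cap I\neq\emptyset$ may be carried by the $U$-part of some pairs and by the $\overline U$-part of others, so neither projection need witness on its own.

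Both issues are resolved by a case split on where $\us$ witnesses, together with the observation that condition~\ref{item:1:def:unfounded} forces positive subjective literals to be true. If some pair has $(X\cap I)\cap U\neq\emptyset$, I would keep only the witnessing-in-$U$ sub-collection $W\subseteq\us_b$. Shrinking $Y$ could in principle revive a blocked bottom rule through condition~\ref{item:4:def:unfounded}, but any such rule $r$ justifying a pair of $W$ must satisfy condition~\ref{item:1:def:unfounded}, whence $\wv_b\models\K a$ for each of its positive literals $\K a$ with $a\in U$; as $\wv_b\models\K a$ puts $a$ in every set of $\wv_b$, any such $a$ that lies in $Y\cap U$ automatically lies in a witnessing-in-$U$ pair, i.e.\ in the $Y$ of $W$. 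Hence $r$ would already have been justifying in $\us_b$, a contradiction, so $W$ is a valid---and by construction witnessing---unfounded set, giving $\wv_b$ unfounded. In the complementary case every pair witnesses in $\overline U$; then no $a\in Y\cap U$ satisfies $\wv_b\models\K a$, the problematic erasures cannot occur, and the full $\overline U$-projection is a witnessing unfounded set for $\wv_t$. Either way $\wv_b$ or $\wv_t$ is unfounded, which is the contrapositive of the forward direction. The delicate point---reconciling the reduct's erasure of positive subjective literals over $U$ with conditions~\ref{item:1:def:unfounded} and~\ref{item:4:def:unfounded}---is exactly where I expect the real work to concentrate.
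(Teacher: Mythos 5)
Your overall strategy is exactly the route this paper indicates for the cited result: the paper itself contains no proof of this theorem (it is imported from~\cite{fandinno19}), but it states that the proof there combines the characterisation ``C19-world views $=$ founded G91-world views'' with the already-established epistemic splitting property of G91 --- precisely your reduction of the theorem to G91 splitting plus a \emph{foundedness splitting lemma}. Your lifting direction and your witnessing-in-$U$ versus witnessing-in-$\overline{U}$ case analysis are the right ingredients for that lemma, and under the intended reading of $\Bodymp$ the arguments go through.

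There is, however, one point which, against the paper's literal definitions, is a genuine gap. Your pivotal step --- ``condition~1 forces $\wv_b \models \K a$, hence $a$ lies in every interpretation of $\wv_b$, hence the pair whose $X'$ contains $a$ witnesses in $U$'' --- only works for positive subjective literals of the exact shape $\K a$ with $a$ an atom. The paper defines $\Bodymp(r)$ as \emph{all atoms occurring in positive subjective literals}, which literally also covers $\M q$ and $\K \Not q$; for those, truth of the literal does not place $q$ in every (or indeed any) interpretation, and your case-(i) construction can fail. Concretely, take $U = \{p,q,t\}$, bottom rules $q \vee t$ and $p \leftarrow \M q$, top rule $s \leftarrow \M q$, and the G91-world view $\wv = [\,\{p,q,s\},\{p,t,s\}\,]$. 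Reading $\Bodymp(p \leftarrow \M q) = \{q\}$, the two pairs $(\{p\},\{p,q,s\})$ and $(\{q,s\},\{p,t,s\})$ form a witnessing unfounded set: every rule whose head meets one of the $X$-components is blocked, either by condition~4 via $q \in Y$ or, for $q \vee t$, by condition~3 since $t \in \{p,t,s\}$. The first pair witnesses in $U$, so your case (i) applies, yet $W = \{(\{p\},\{p,q\})\}$ is \emph{not} an unfounded set of the bottom: $p \leftarrow \M q$ now satisfies conditions~1--4, because $q$ has disappeared from $Y$ while $\wv_b \models \M q$ still holds. The repair is to read $\Bodymp(r)$ as the paper's own informal gloss says --- the atoms $a$ with a positive literal $\K a$ in the body, $\M$ being $\Not\K\Not$ and hence not positive --- which is also forced on independent grounds: under the broad reading the world view above, which is intuitively perfectly founded, would count as unfounded and the program would have no C19-world view at all. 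If you make that reading explicit, your proof is complete; as written, it silently assumes it. (A small wording slip besides: in case (i) the contradiction is that $r$ would satisfy conditions~1--4 with respect to the original $\us$, contradicting that $\us$ is an unfounded set --- not that $r$ is ``justifying in $\us_b$''.)
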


\section{Applying Epistemic Splitting to Conformant Planning}
\label{sec:conf}

The idea of encoding the problem of finding a conformant plan as the task of obtaining a world view was first introduced by~\cite{kawabagezh15}.
To conclude the exploration of consequences of epistemic splitting, let us consider its possible application to simplify the representation of conformant planning problems.
To this aim, consider the following simple example.
\begin{example}
To turn on the light in a room, we can toggle one of two lamps $l_1$ or $l_2$.
In the initial state, lamp $l_1$ is plugged but we ignore the state of $l_2$.
Our goal is finding a plan that guarantees we get light in the room in one step.
\end{example}
A logic program that encodes this scenario for a single transition\footnote{For simplicity, we omit time arguments or inertia, as they are not essential for the discussion.} could be $\newprogram\label{prg:4}$:
\begin{gather*}
\begin{IEEEeqnarraybox}{c}
plugged(l_1) \\
plugged(l_2) \vee \sneg plugged(l_2) 
\end{IEEEeqnarraybox}
\hspace{30pt}
\begin{IEEEeqnarraybox}{rCl}
light & \leftarrow & toggle(L), plugged(L) \\
\bot & \leftarrow & toggle(l_1), toggle(l_2)
\end{IEEEeqnarraybox}
\end{gather*}
for $L \in \{l_1,l_2\}$. 
As we can see, $toggle(l_1)$ would constitute a conformant plan, since we obtain $light$ regardless of the initial state, while this does not happen with plan $toggle(l_2)$.
In order to check whether a given sequence of actions $A_0,\dots, A_n$ is a valid conformant plan one would expect that, if we added those facts to the program, a subjective constraint should be sufficient to check that the goal holds in all the possible outcomes.
In our example, we would just use:
\begin{eqnarray}
\bot \leftarrow \Not \K light \label{f:subcons}
\end{eqnarray}
and check that the program $\program\ref{prg:4} \cup \{toggle(L)\} \cup \{\eqref{f:subcons}\}$ has some world view, varying $L \in \{l_1,l_2\}$.
Subjective constraint monotonicity guarantees that the addition of this ``straighforward'' formalisation has the expected meaning.

This method would only allow testing if the sequence of actions constitutes a conformant plan, but does not allow generating those actions.
A desirable feature would be the possibility of applying the well-known ASP methodology of separating the program into three sections: generate, define and test.
In our case, the ``define'' and the ``test'' sections would respectively be $\program\ref{prg:4}$ and \eqref{f:subcons}, but we still miss a ``generate'' part, capable of considering different alternative conformant plans.
The problem in this case is that we cannot use a simple choice:
\begin{eqnarray*}
toggle(L) \vee \sneg toggle(L)
\end{eqnarray*}
because this would allow a same action to be executed in some of the stable models and not executed in others, all inside a \emph{same} world view.
Let us assume that our epistemic semantics has some way to non-deterministically generate a world view in which either $\K a$ or $\K \Not a$ holds using a given set of rules $Choice(a)$.
For instance, in the G91-semantics, we could just have the rule $Choice(a) = \set{ a \leftarrow \Not \K \Not a}$, though other semantics may have alternative ways of expressing this intended behaviour.
Then, take the program $\newprogram\label{prg:5}$ consisting of rules 
\begin{eqnarray}
Choice(toggle(L)) \label{ff:choices}
\end{eqnarray}
with $L \in \set{l_1,l_2}$
plus $\program\ref{prg:4}$ and \eqref{f:subcons}.
If our semantics satisfies epistemic splitting, it is safe to obtain the world views in three steps: generate first the alternative world views for $toggle(l_1)$ and $toggle(l_2)$ using \eqref{ff:choices}, apply $\program\ref{prg:4}$ and rule out those world views not satisfying the goal $light$ in all situations using \eqref{f:subcons}. 
To fulfill the preconditions for applying splitting, we would actually need to replace objective literal $toggle(L)$ by $\K toggle(L)$ in all the bodies of $\program\ref{prg:4}$, but this is safe in the current context.
Now, we take the bottom program \label{f:choices} to obtain 4 possible world views $\wv_0=[\{toggle(l_1)\}]$, $\wv_1=[\{toggle(l_2)\}]$, $\wv_2=[\{toggle(l_1),toggle(l_2)\}]$ and $\wv_3=[\emptyset]$.
When we combine them with the top $\program\ref{prg:4}$ we obtain $W'_0$ consisting of two stable models:
\begin{eqnarray*}
\hspace{0.25cm}\{toggle(l_1),plugged(l_2),light,\dots\}
\hspace{1cm}
\{toggle(l_1),\sneg plugged(l_2),light,\dots\}
\end{eqnarray*}
and $W'_1$ consisting of other two stable models:
\begin{eqnarray*}
\hspace{0.15cm}
\{toggle(l_2),plugged(l_2),light,\dots\}
\hspace{1.25cm}
\{toggle(l_2),\sneg plugged(l_2),\dots\}
\hspace{0.5cm}
\end{eqnarray*}
where the latter does not contain $light$.
Finally, constraint \eqref{f:subcons} would rule~out~$\wv'_1$.

In our opinion, this representation is more natural than the representation given in~\cite{kawabagezh15}, whose semantics does not satisfy neither splitting nor subjective constraint monotonicity.
For the sake of comparison, in that approach, the ``generate'' part would consist of the following three rules:
\begin{gather*}
toggle(l_1) \leftarrow \Not\K\Not toggle(l_1)
\hspace{50pt}
toggle(l_2) \leftarrow \Not\K\Not toggle(l_2)
\\[5pt]
\leftarrow toggle(l_1), \, toggle(l_2)
\end{gather*}
while, for the test part, the subjective constraint~\eqref{f:subcons},
would need to be replaced by the following pair of rules:
\begin{gather*}
\leftarrow\mathit{light},\, \Not \K \mathit{light}
\hspace{1cm}
\leftarrow \K\Not \mathit{light}
\end{gather*}
In the K15 semantics it is not clear how to modularly define a set of rules $Choice(a)$ that produce the intended effect nor how to generalise it to allow concurrent actions.
Note that, if we consider a generating part without the constraint, that is,
\begin{gather*}
toggle(l_1) \leftarrow \Not\K\Not toggle(l_1)
\hspace{0.751cm}
toggle(l_2) \leftarrow \Not\K\Not toggle(l_2)
\end{gather*}
we would obtain a unique world view $\cset{ \set{toggle(l_1),\, toggle(l_2)} }$, missing thus the world view 
$\cset{ \set{toggle(l_1) } }$ which corresponds to the simplest conformant plan.





\black

\section{Conclusions}

We have introduced a formal property for semantics of epistemic specifications.
This property, we called \emph{epistemic splitting}, has a strong resemblance to the splitting theorem well-known for regular ASP programs.
Epistemic splitting can be applied when we can divide an epistemic logic program into a bottom part for a subset $U$ of atoms and a top part, that only refers to atoms in $U$ through subjective literals (those using modal epistemic operators).
When this happens, epistemic splitting allows obtaining the world views of the program in two steps: first, computing the world views of the bottom and, second, using each bottom world view $\wv$ to replace subjective literals for atoms in $U$ in the top by their truth value with respect to~$\wv$.

We have studied several consequences of epistemic splitting. For instance, if the program is stratified with respect to subjective literals then it will have a unique world view, at most.
Another consequence is that constraints only consisting of subjective literals will have a monotonic behaviour, ruling out world views that satisfy the constraint body.

Our study of the main semantics in the literature has shown that only the original semantics  (G91; \citeNP{gelfond91a}), its generalisations to propositional formulas~\cite{wanzha05a,truszczynski11} and Founded Autoepistemic Equilibrium Logic (C19;~\citeNP{cafafa19b}), satisfy epistemic splitting while the rest of approaches we considered do not, as we showed with counterexamples.
\begin{table}
\centering
\begin{tabular*}{11.25cm}{  @{}p{6.5pc} @{\hskip1cm}  @{}p{3pc} @{}p{3pc} @{}p{3pc} @{}p{3pc} @{}p{3pc} @{}p{3pc} @{}p{3pc} @{}p{3pc} @{}p{3pc} }
&G91 & G11 
& F15 
& K15 & S17
& C19
\\\hline
Supra-S5& 
\checkmark & \checkmark 
&\checkmark
& \checkmark & \checkmark 
& \checkmark
\\\hline
Supra-ASP& 
\checkmark & \checkmark 
& \checkmark
& \checkmark & \checkmark
& \checkmark 
\\\hline
Subjective constraint monotonicity&
\checkmark & \checkmark 
& 
& &
& \checkmark
\\\hline
Splitting
&\checkmark &  
& 
& &
& \checkmark
\\\hline
Foundness&
& & & &
& \checkmark
\\\hline
\end{tabular*}
	\caption{Summary of properties in different semantics.}
	\label{table:summary}
\end{table}
Table~\ref{table:summary} is taken from~\cite{fandinno19} and summarises the known results for different semantics with respect to the properties discussed through this paper plus the \emph{foundness} property from~\cite{cafafa19b}.
Recall that most of the semantics were born motivated by the existence of self-supported world views in the G91 semantics:
for instance, the program consisting of the single rule $a \leftarrow \K a$ yields two world views $[\emptyset]$ and $[\{a\}]$ but the latter justifies the atom $a$ by the mere assumption of $\K a$ without further evidence, something that seems counterintuitive.
The foundness property precisely characterises this problem in terms of unfounded sets and, as shown in the table, is  currently satisfied only by C19, which in addition keeps the good behaviour of G91 with respect to epistemic splitting.

We have also explored how epistemic splitting may facilitate the simple application of the generate-define-test methodology, well-known in ASP, to the formalisation of conformant planning.
Finally, as said in the introduction, a different kind of epistemic splitting had also been proved for G91 in~\cite{watson00}, reinforcing the idea that this semantics can be interpreted in a modular way.
Notice that the sets of programs that can be split under these two definitions are incomparable.

Recall that our main motivation when defining the splitting property was to establish a minimal requirement that seemed reasonable.
Still, this property was not satisfied by most semantics.
On the other hand, the semantics that do satisfy this property may satisfy even stronger notions of splitting, which will be matter of future study.

\paragraph{Acknowledgements.}
We are thankful to Michael Gelfond, Richard Watson, Patrick T. Kahl,
and the anonymous reviewers of the \emph{Seventeenth International Workshop on Non-monotonic Reasoning} (NMR'18) and the \emph{Fifteenth International Conference on Logic Programming  and
Nonmonotonic Reasoning} (LPNMR'19) where preliminary versions of this work were presented, for their comments and suggestions that have helped improving the paper substantially.
We are also thankful to the organisers  of LPNMR'19 for their support in preparing this extended version.

\bibliographystyle{acmtrans}
\bibliography{krr,refs,procs}

\end{document}